\documentclass[sigconf]{acmart}
\AtBeginDocument{%
  }

\setcopyright{acmlicensed}
\copyrightyear{2026}
\acmYear{2026}
\setcopyright{cc}
\setcctype{by}
\acmConference[MM '26]{Proceedings of the 34th ACM International Conference on Multimedia}{November 10--14, 2026}{Rio de Janeiro, Brazil}
\acmBooktitle{Proceedings of the 34th ACM International Conference on Multimedia (MM '26), November 10--14, 2026, Rio de Janeiro, Brazil}
\acmDOI{10.1145/3767308.3835730}
\acmISBN{979-8-4007-2213-4/2026/11}

\usepackage{amsmath}
\usepackage{mathtools}
\usepackage{amsthm}
\usepackage{xurl}
\usepackage{multirow}
\usepackage{bm}
\usepackage{mathtools}
\usepackage{mathrsfs}

\usepackage{tabularx}
\usepackage[table]{xcolor}
\definecolor{lightblue}{RGB}{220,235,255}
\definecolor{myblue}{RGB}{102,153,204}
\definecolor{myorange}{RGB}{230,180,80}
\definecolor{lightgraytext}{gray}{0.39}

\newtheorem{assumption}{Assumption}
\newcounter{theorem_count}

\usepackage{array}
\newcolumntype{s}{>{\hsize=0.7\hsize\raggedright\arraybackslash}X}
\newcolumntype{m}{>{\hsize=1.1\hsize\centering\arraybackslash}X}

\acmSubmissionID{4544}

\begin{document}

%%
%% The "title" command has an optional parameter,
%% allowing the author to define a "short title" to be used in page headers.
\title{Stable Attention Response for Reliable Precipitation Nowcasting}

%%
%% The "author" command and its associated commands are used to define
%% the authors and their affiliations.
%% Of note is the shared affiliation of the first two authors, and the
%% "authornote" and "authornotemark" commands
%% used to denote shared contribution to the research.

\author{Penghui Wen}
\email{penghui.wen@sydney.edu.au}
\affiliation{%
  \institution{The University of Sydney}
  \country{Camperdown, NSW, Australia}
}

\author{Zexin Hu}
\email{zehu4485@uni.sydney.edu.au}
% \additionalaffiliation{%
%     \institution{Ant Group}
%     \country{Hangzhou, Zhejiang, China}
% }
\affiliation{%
  \institution{The University of Sydney}
  \country{Camperdown, NSW, Australia}
}
\affiliation{%
  \institution{Ant Group}
  \country{Hangzhou, Zhejiang, China}
}

\author{Sen Zhang}
\email{senzhang.thu10@gmail.com}
\affiliation{%
  \institution{Independent Researcher}
  \country{Sydney, NSW, Australia}
}

\author{Patrick Filippi}
\email{patrick.filippi@sydney.edu.au}
\affiliation{%
 \institution{The University of Sydney}
 \country{Camperdown, NSW, Australia}
 }

\author{Xiaogang Zhu}
\email{xiaogang.zhu@adelaide.edu.au}
\affiliation{%
    \institution{Adelaide University}
  \country{Adelaide, SA, Australia}}

\author{Allen Benter}
\email{allen.benter@dpird.nsw.gov.au}
\affiliation{%
 \institution{Orange Agricultural Institute}
 \country{Orange, NSW, Australia}
 }

\author{Thomas Bishop}
\email{thomas.bishop@sydney.edu.au}
\affiliation{%
 \institution{The University of Sydney}
 \country{Camperdown, NSW, Australia}
 }

\author{Zhiyong Wang}
\email{zhiyong.wang@sydney.edu.au}
\affiliation{%
 \institution{The University of Sydney}
 \country{Camperdown, NSW, Australia}
 }

\author{Kun Hu}
\email{k.hu@ecu.edu.au}
\authornotemark[1]
\affiliation{%
  \institution{Edith Cowan University}
  \country{Joodalup, WA, Australia}}

%%
%% By default, the full list of authors will be used in the page
%% headers. Often, this list is too long, and will overlap
%% other information printed in the page headers. This command allows
%% the author to define a more concise list
%% of authors' names for this purpose.
\renewcommand{\shortauthors}{Penghui Wen et al.}

%%
%% The abstract is a short summary of the work to be presented in the
%% article.
\begin{abstract}

Precipitation nowcasting remains challenging due to the highly localized, rapidly evolving, and heterogeneous nature of atmospheric dynamics. Although recent methods adopt attention-based architectures in both unimodal and multimodal settings, they mainly emphasize stronger representation learning and prediction capacity, while paying less attention to the stability of attention responses across samples. In this work, we show that cross-sample instability of attention-response energy is an important and previously underexplored source of forecasting unreliability. Empirically, inaccurate forecasts are associated with larger attention-response energy variance across heads and layers. Theoretically, we show that cross-sample variability can propagate through self-attention, and enlarge a lower bound on prediction error.
Based on this insight, we propose {HARECast}, a {H}ead-wise {A}ttention {R}esponse {E}nergy-regulated framework for precipitation nowcasting. HARECast explicitly models head-wise attention-response energy and stabilizes it through a group-wise regularization objective that reduces cross-sample fluctuations. The proposed formulation is generic and applicable to both unimodal and multimodal nowcasting. We instantiate HARECast in a standard forecasting pipeline with reconstruction branches and a diffusion-based predictor, and evaluate it on commonly used benchmarks--SEVIR and MeteoNet. Experiments demonstrate that HARECast achieves state-of-the-art performance. Our code is available at \textit{https://github.com/ph-w2000/HARECast}.

\end{abstract}

%%
%% The code below is generated by the tool at http://dl.acm.org/ccs.cfm.
%% Please copy and paste the code instead of the example below.
%%
\begin{CCSXML}
<ccs2012>
   <concept>
       <concept_id>10010405.10010432.10010437</concept_id>
       <concept_desc>Applied computing~Earth and atmospheric sciences</concept_desc>
       <concept_significance>500</concept_significance>
       </concept>
   <concept>
       <concept_id>10010147.10010178.10010224</concept_id>
       <concept_desc>Computing methodologies~Computer vision</concept_desc>
       <concept_significance>500</concept_significance>
       </concept>
 </ccs2012>
\end{CCSXML}

\ccsdesc[500]{Applied computing~Earth and atmospheric sciences}
\ccsdesc[500]{Computing methodologies~Computer vision}
%%
%% Keywords. The author(s) should pick words that accurately describe
%% the work being presented. Separate the keywords with commas.

\keywords{Precipitation nowcasting; Multimodal modeling; Energy regulartion}
%% A "teaser" image appears between the author and affiliation
%% information and the body of the document, and typically spans the
%% page.

% \received{20 February 2007}
% \received[revised]{12 March 2009}
% \received[accepted]{5 June 2009}

%%
%% This command processes the author and affiliation and title
%% information and builds the first part of the formatted document.
\maketitle

\section{Introduction}
Precipitation nowcasting aims to predict near-future weather evolution from recent observation and has become increasingly essential in daily planning, transportation, and disaster management~\cite{zhang2023skilful}. 
Despite substantial progress in deep forecasting models, reliable nowcasting remains challenging because atmospheric dynamics are highly localized, rapidly evolving, and strongly heterogeneous across samples~\cite{wen2026duocast}. As a result, even small spatial deviations in predicted rainfall patterns can noticeably degrade forecast quality.

To address such complex spatio-temporal variability, recent nowcasting methods increasingly adopt attention-based~\cite{vaswani2017attention} architectures. This trend appears in both unimodal settings, such as radar-based forecasting~\cite{yu2024diffcast, fengperceptually}, and multimodal settings that combine radar with satellite observations~\cite{yu2025integrating, yu2025pimmnet}. Attention is attractive because it can adaptively focus on informative regions and capture long-range interactions that are difficult to model with purely convolutional designs. 
As a result, improving representational capacity, generative prediction or cross-modal fusion has become a major direction in modern nowcasting approaches.

However, we argue that a more fundamental issue has been underexplored: \emph{the stability of the attention response itself}. In nowcasting, samples often differ substantially in intensity, spatial coverage, displacement patterns, and cross-source consistency~\cite{yu2025integrating,zhang2023skilful}, as shown in Figure~\ref{fig:motivation} (b). Under such regime changes, the responses of attention heads may fluctuate sharply across samples. This instability already arises in unimodal forecasting and can become even more pronounced when heterogeneous modalities are introduced. As a result, attention-based models may exhibit unstable internal responses across samples, creating a previously overlooked bottleneck for reliable forecasting.

\begin{figure}[t]
  \centering
   \includegraphics[width=\linewidth]{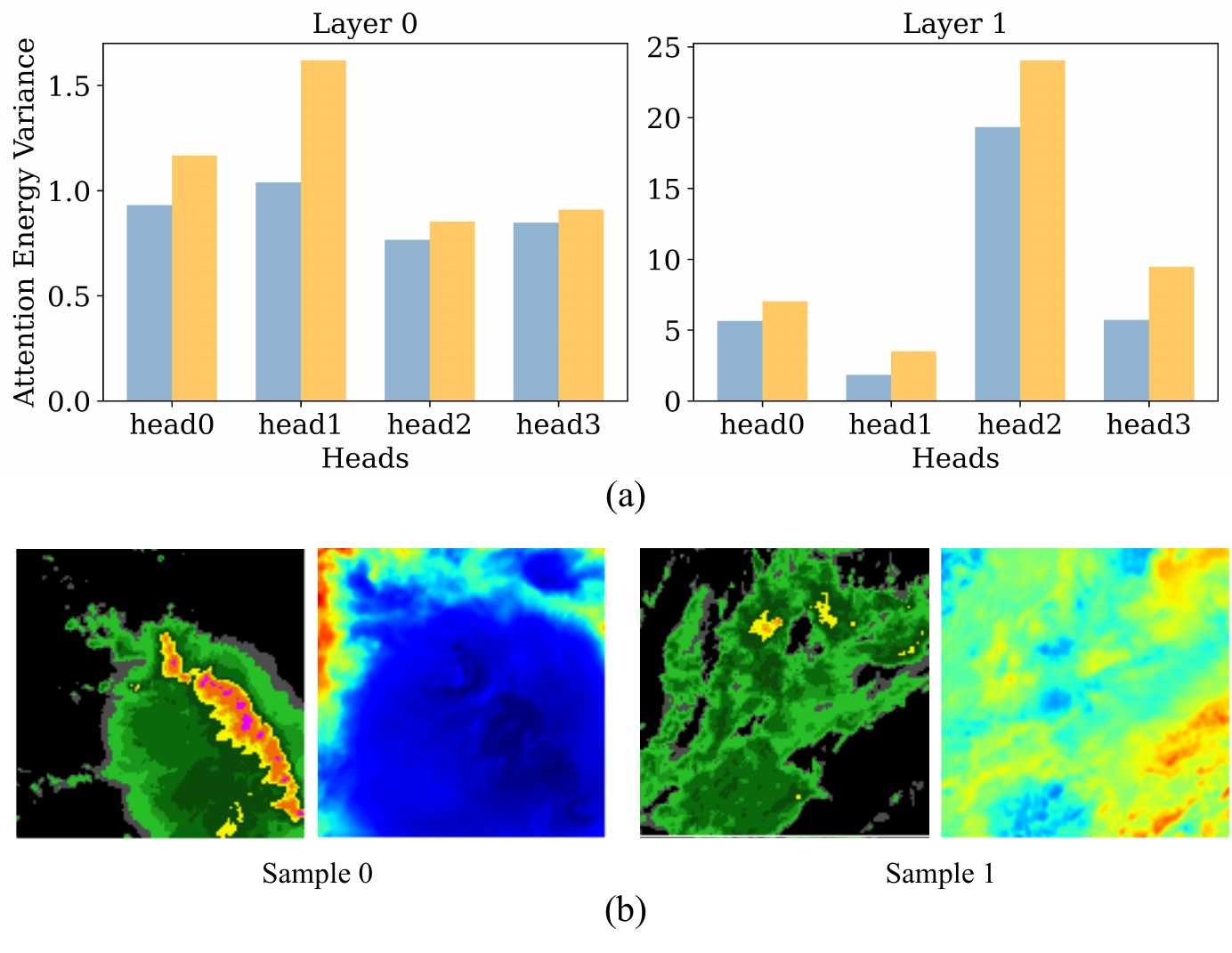}
   \caption{
(a) Visualization of the variance of attention-response energy across samples within a batch, shown for different heads and layers of PiMMNet~\cite{yu2025pimmnet}. An
\textcolor{myblue}{accurately predicted batch} exhibit consistently lower variance, while an 
\textcolor{myorange}{inaccurately predicted batch} shows higher variance. (b) Examples of precipitation patterns exhibiting rapid and large spatial variations, which lead to strong cross-sample differences and high attention-response variance.
   }
   \label{fig:motivation}
\end{figure}

We study this phenomenon from both empirical and theoretical perspectives.
Empirically, we observe a clear correlation between prediction quality and the cross-sample variance of attention-response energy: accurate batch tend to exhibit more stable head responses, while inaccurate batch are often associated with larger energy fluctuations across heads and layers, as shown in Figure~\ref{fig:motivation} (a). 
Theoretically, we show that cross-sample variability in the input can propagate through self-attention and induce variance in the attention response, which enlarges a lower bound on the prediction error.
These results suggest that expressive attention alone is insufficient for reliable nowcasting, the attention response should also remain \emph{stable across samples}.

Motivated by this observation, we propose \textbf{HARECast}, a \textbf{H}ead-wise \textbf{A}ttention \textbf{R}esponse \textbf{E}nergy-regulated diffusion framework for precipitation nowcasting.
The central idea of HARECast is to explicitly model and stabilize the energy of attention-induced responses.
Specifically, we define head-wise attention response energy (HARE) to quantify the effective response strength of each attention head, and use it to organize heads into groups with different response patterns.
We then introduce a group-wise HARE stabilization objective that regularizes attention-response energy toward the mean response of each head group,thereby reducing cross-sample fluctuations and stabilizing forecasting behavior. 
Importantly, this formulation is general: it can be applied to both unimodal and multimodal nowcasting network architectures, and is especially useful when heterogeneous modalities introduce additional response instability.

We instantiate HARECast within a standard nowcasting framework by combining the proposed HARE stabilization mechanism with modality-aware reconstruction branches and a diffusion-based predictor for future radar forecasting. This design allows the method to be integrated into common forecasting architectures while delivering state-of-the-art performance on benchmark datasets including SEVIR~\cite{veillette2020sevir} and MeteoNet~\cite{larvor2020meteo}.

Our main contributions are summarized as follows:
\begin{itemize}
\item We show that cross-sample instability of attention-response energy is an important and previously underexplored failure mode in precipitation nowcasting.
\item We provide empirical and theoretical evidence that cross-sample variability can propagate through self-attention and increase a lower bound on prediction error, motivating explicit stabilization of attention-response energy.
\item We propose \textbf{HARECast}, a head-wise attention response energy-regulated framework for nowcasting, and demonstrate state-of-the-art performance on benchmark datasets.
\end{itemize}

\section{Related Work}
We briefly review related work on unimodal and multimodal precipitation nowcasting, and attention energy analysis.

\noindent\textbf{Unimodal Precipitation Nowcasting.}
Deep precipitation nowcasting methods have evolved from deterministic spatiotemporal prediction models to probabilistic and hybrid generative frameworks. Early \textit{deterministic models} learn spatiotemporal dynamics for mean-centric precipitation~\cite{shi2015convolutional}: PhyDNet~\cite{guen2020disentangling} incorporates physics-guided disentanglement to separate precipitation dynamics from residual information; FACL~\cite{yan2024fourier} replaces L2 with a signal-oriented spatiotemporal loss; AlphaPre~\cite{Lin_2025_CVPR} factorizes phase and amplitude to decouple motion from intensity. PercpCast~\cite{fengperceptually} incorporates perceptual constraints through a rectified flow model to achieve better perceptual performance. 

More recent \textit{probabilistic models} introduce latent variables to represent the intrinsic uncertainty of future weather, enabling finer depiction of small-scale processes~\cite{ravuri2021skilful}: DGMR uses a GAN with separate spatial/temporal discriminators~\cite{ravuri2021skilful}. NowcastNet~\cite{zhang2023skilful} injects physical priors into a GAN. Prediff adds knowledge-guided sampling~\cite{gao2024prediff}. DuoCast~\cite{wen2026duocast} proposes a dual-diffusion framework for precipitation forecasting that decomposes predictions into low- and high-frequency components modeled in orthogonal latent subspaces. A theoretical analysis proves that this frequency decomposition reduces prediction error. \textit{Hybrid models} combine deterministic backbones for large-scale evolution with probabilistic modules for local variability, balancing accuracy and diversity: CasCast~\cite{gong2024cascast} and DiffCast~\cite{yu2024diffcast} exemplify this pairing of trajectory prediction with stochastic refinement. These methods have substantially improved forecasting quality, but their main focus remains on predictive capacity, uncertainty modeling, or generative realism, rather than on the cross-sample stability of internal attention responses.

% MAU~\cite{chang2021mau}
% ConvGRU~\cite{shi2017deep}
% STRPM~\cite{chang2022strpm}

\noindent\textbf{Multimodal Precipitation Nowcasting.}
To overcome the limitations of single-source observations, recent studies increasingly incorporate multiple meteorological modalities, especially radar and satellite data. Existing multimodal precipitation nowcasting methods mainly differ in how they fuse complementary information. Rain-F~\cite{choi2021rain} and MetNet-2~\cite{espeholt2022deep} adopts an early-fusion strategy to process multimodal data. It aligns heterogeneous inputs, including radar and satellite, and concatenates them into a unified tensor along the channel dimension. CrossViViT~\cite{boussif2023improving} introduces an intermediate fusion mechanism at the token level after initial feature extraction. STJointNet~\cite{zheng2024cross} adopts a multitask learning framework to jointly forecast multiple meteorological modalities, highlighting the advantages of cross-modal learning. IMLSP~\cite{yu2025integrating} introduces a temporally adaptive multimodal attention module to enhance cross-modal interactions in the latent space, and employs a flow-based distribution adaptor to refine predictions. PiMMNet~\cite{yu2025pimmnet} proposes a dedicated attention mechanism grounded in the advection–diffusion principle, enabling effective integration of radar and satellite observations for improved prediction. 

While these methods demonstrate the value of cross-source complementarity, they primarily aim to enhance multimodal representation learning and fusion quality. In contrast, our work focuses on a more general issue that arises in both unimodal and multimodal settings, namely the cross-sample stability of attention-response energy, which can become even more critical when heterogeneous inputs are involved.

\noindent\textbf{Attention Energy Analysis.}
A large body of work has studied the role of attention heads in Transformer models. Prior studies show that not all heads contribute equally~\cite{michel2019sixteen}, and many methods exploit it by adjusting and pruning heads to reduce computation and improve accuracy. The norm-based analysis~\cite{kobayashi2020attention} studies transformers by jointly examining attention weights and the norms of head outputs.  X-Pruner~\cite{yu2023x} proposes an explainability-aware masking scheme that estimates each prunable unit’s class-wise contribution for structured pruning. $\sigma$Reparam~\cite{zhai2023stabilizing} links low attention entropy to training instability, often observed as oscillatory loss. 
However, these studies mainly focus on model compression, interpretability, or training dynamics in general Transformer models. They do not investigate whether cross-sample fluctuations in attention-response energy constitute a reliability bottleneck for forecasting, nor do they provide a mechanism to explicitly stabilize such responses in precipitation nowcasting. In this work, we investigate, both empirically and theoretically, the relationship between prediction quality and the cross-sample variance of attention-response energy, and further propose a method to stabilize the energy of attention-induced responses for improved forecasting performance.

\section{Methodology}
\begin{figure*}[t]
  \centering
   \includegraphics[width=\linewidth]{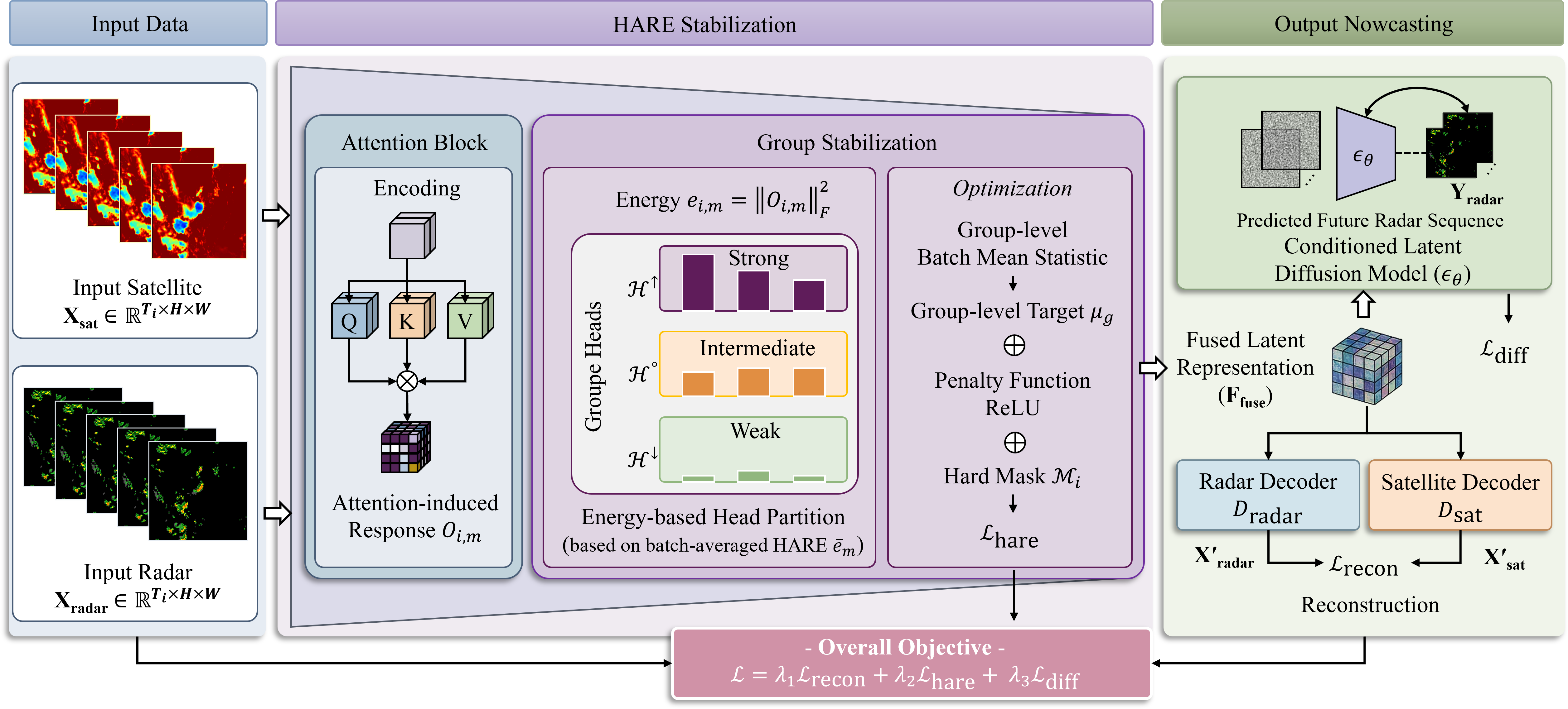}
  \caption{
Overview of \textbf{HARECast}. Given historical radar observations and, when available, satellite inputs, the model first encodes the input sequence with self-/cross-attention blocks. It computes head-wise attention energies, which are then used to partition heads into strong, intermediate, and weak groups according to batch energy statistics. The proposed HARE stabilization objective regularizes attention-response energy toward group-wise reference means, suppressing cross-sample fluctuations. The stabilized fused latent representation is subsequently used for conditioning a diffusion model for nowcasting. The full model is trained jointly with reconstruction, HARE stabilization, and diffusion generative objectives.
}
   \label{fig:architecture}
\end{figure*}

\subsection{Overview}
\noindent\textbf{Problem formulation.}
We study precipitation nowcasting from historical observations in both unimodal and multimodal settings. In the unimodal setting, given past radar observations $\mathbf{X}_{\text{radar}}\in\mathbb{R}^{T_I\times H\times W}$, the goal is to predict future radar frames $\mathbf{Y}_{\text{radar}}\in\mathbb{R}^{T_O\times H\times W}$ by learning $p(\mathbf{Y}_{\text{radar}}\mid \mathbf{X}_{\text{radar}})$. In the multimodal setting, the input additionally includes satellite observations $\mathbf{X}_{\text{sat}}\in\mathbb{R}^{T_i\times H\times W}$, and the goal becomes learning $p(\mathbf{Y}_{\text{radar}}\mid \mathbf{X}_{\text{radar}}, \mathbf{X}_{\text{sat}})$. Here, $H$ and $W$ denote the spatial dimensions, while $T_I$ and $T_O$ denote the lengths of the input and output sequences, respectively. To achieve longer forecast horizons, predictions are generated autoregressively.

\noindent\textbf{Core idea.}
Our method is built on one key observation: samples with strong localized convection, abrupt spatial shifts, and complex inter-modal discrepancies often induce highly unstable attention responses across samples (see Figure~\ref{fig:motivation}). This instability propagates to the prediction and degrades forecast reliability. We therefore design the method around \emph{stable attention energy across samples}. %, rather than only improving feature fusion or generation quality.

\subsection{Attention Variability \textit{vs.} Prediction Error}
\label{sec:energy_routing}

We provide a theoretical motivation for energy stabilization by analyzing how cross-sample variability propagates through self-attention and increases a lower bound on prediction error. This analysis justifies the need to explicitly stabilize attention energy across samples.
Let ${F(\mathbf{X})}$ be self-attention mapping obtained by \textit{queries} and \textit{keys}, where $\mathbf{X}$ is the input feature map.
For theoretical clarity, we consider a simplified predictor \footnote{For clarity, the analysis considers a self-attention block followed by a linear predictor. The same intuition extends to the full multi-head encoder and the diffusion-based predictor used in our framework.} (e.g., computations for \textit{values}) of the form
\begin{equation}
\hat{\mathbf{Y}} = \mathbf{G}(F(\mathbf{X})) = \mathbf{W}F(\mathbf{X}) + \mathbf{b},
\label{eq:linear_predictor}
\end{equation}
where $\mathbf{W}$ and $\mathbf{b}$ are the parameters of a linear prediction head.

\noindent\textbf{Assumption 1} \textit{(Variance non-degeneracy of self-attention).}
We assume that, on the data manifold, the self-attention mapping $\mathbf{F}$ does not collapse cross-sample variability. Specifically, there exists a constant $c_F > 1$ such that
\begin{equation}
\mathrm{Var}\!\bigl(F(\mathbf{X})\bigr)
\ge
c_F^2\,\mathrm{Var}(\mathbf{X}).
\label{eq:attn_var_assumption}
\end{equation}
Under Assumption 1, variability in the input necessarily induces non-trivial variability in the attention response.

\noindent\textbf{Lemma 1} \textit{(Variance propagation through the prediction head).}
Let $\hat{\mathbf{Y}} = \mathbf{W}F(\mathbf{X}) + \mathbf{b}$, and assume that the linear head is non-degenerate with minimum singular value
\begin{equation}
\sigma_{\min}(\mathbf{W}) \ge c_G > 0.
\label{eq:head_nondegenerate}
\end{equation}
Then the variance of the prediction is lower bounded by
\begin{equation}
\mathrm{Var}(\hat{\mathbf{Y}})
\ge
c_G^2\,\mathrm{Var}(F(\mathbf{X}))\ge
c_G^2 c_F^2\,\mathrm{Var}(\mathbf{X}).
\label{eq:linear_var_bound}
\end{equation}

\noindent\textbf{Lemma 2} \textit{(Lower bound on prediction error).}
Define the mean squared error (MSE) as
$
\mathrm{MSE}
=
\mathbb{E}\bigl\|\mathbf{Y}-\hat{\mathbf{Y}}\bigr\|_2^2.
\label{eq:mse_def}
$.
The following lower bound holds:
\begin{equation}
\mathrm{MSE}
\ge
\bigl(\mathbb{E}[\hat{\mathbf{Y}}] - \mathbb{E}[\mathbf{Y}]\bigr)^2
+
\left(
\mathrm{Var}(\hat{\mathbf{Y}})^{1/2}
-
\mathrm{Var}(\mathbf{Y})^{1/2}
\right)^2.
\label{eq:mse_lower_bound}
\end{equation}

As $\mathbf{X}$ and 
$\mathbf{Y}$ are drawn from temporally adjacent segments of the same stochastic precipitation process in this problem(with the reference time randomly sampled), it is natural to assume that their variances are comparable. We have the below major conclusion.

\refstepcounter{theorem_count}\noindent\textbf{Theorem 1}\label{thm:mse_lower_bound} \textit{(MSE lower bound induced by the attention response and the input variability).}
Assume that the linear prediction head is non-degenerate with minimum singular value
$\sigma_{\min}(W)\ge c_G>0$.
Assume further that the attention response satisfies
$\mathrm{Var}(F(\mathbf{X})) \ge c_F^2 \mathrm{Var}(\mathbf{X})$
for some constant $c_F>0$, and that
$
\mathrm{Var}(\mathbf{Y})=\mathrm{Var}(\mathbf{X}).
\label{eq:thm_matched}
$
If
$c_G c_F > 1$,
then the mean squared error satisfies
\begin{equation}
\mathrm{MSE}
\ge
\bigl( \mathbb{E}[\hat{\mathbf{Y}}] - \mathbb{E}[\mathbf{Y}] \bigr)^2
+
\left(c_G{\mathrm{Var}(F(\mathbf{X}))}^{\frac{1}{2}}-{\mathrm{Var}(\mathbf{Y})}^{\frac{1}{2}}\right)^2,
\label{eq:thm_bound_F}
\end{equation}
and consequently
\begin{equation}
\mathrm{MSE}
\ge
\bigl( \mathbb{E}[\hat{ \mathbf{Y}]} - \mathbb{E}[\mathbf{Y}] \bigr)^2
+
\left(c_G-\frac{1}{c_F}\right)^2 \mathrm{Var}(F\mathbf(\mathbf{X})),
\label{eq:thm_bound_F2}
\end{equation}
as well as
\begin{equation}
\mathrm{MSE}
\ge
\bigl(\mathbb{E}[\hat{\mathbf{Y}}]-\mathbb{E}[\mathbf{Y}] \bigr)^2
+
(c_G c_F - 1)^2 \mathrm{Var}(\mathbf{\mathbf{Y}}).
\label{eq:thm_bound_X}
\end{equation}
In particular, if the predictor is unbiased, i.e.,
$\mathbb{E}[\hat {\mathbf{Y}}]=\mathbb{E}[\mathbf{Y}]$,
then the last two bounds reduce to
\begin{equation}
\mathrm{MSE}\ge \left(c_G-\frac{1}{c_F}\right)^2 \mathrm{Var}(F(\mathbf{X})),
\mathrm{MSE}\ge (c_G c_F - 1)^2 \mathrm{Var}(\mathbf{X}).
\end{equation}

\noindent\textbf{Implication.}
Theorem~\ref{thm:mse_lower_bound} suggests that excessive variance in the attention response $F(\mathbf{X})$ is undesirable for reliable forecasting. When attention responses fluctuate too strongly across samples, the resulting variance can propagate through the prediction head, enlarge the discrepancy between prediction variance and target variance, and thereby increase a lower bound on the MSE. This provides the theoretical motivation for our energy stabilization design: instead of allowing attention-response energy to vary uncontrollably across samples, we explicitly regulate it to suppress harmful fluctuations while preserving meaningful regime-dependent differences. In this way, we aim to reduce variance propagation from the attention module to $\hat{\mathbf{Y}}$ and improve forecasting reliability. %Further details and proofs are available in the Supplementary.

%\noindent\textbf{Implication.}
%These results indicate that large variance for $F(\mathbf{X})$ in the attention module is undesirable for reliable forecasting: if attention responses fluctuate too strongly across samples, they can unnecessarily increase the variance of the final prediction and thus worsen the MSE lower bound. This motivates our energy stabilization design. Instead of allowing attention energies to vary uncontrollably, we explicitly regulate their variance, so that the attention module remains stable across meteorological samples while still preserving meaningful regime-dependent differences. By controlling the variance of the attention response, we aim to reduce variance propagation to $\hat{\mathbf{Y}}$ and thereby lower the resulting MSE.

\subsection{Group-wise HARE Stabilization}

We stabilize attention responses at the group level by explicitly regularizing head-wise attention response energy across samples. This design suppresses harmful cross-sample variability while preserving meaningful differences in response strength, thereby reducing the risk of error amplification.

\noindent\textbf{Head-wise Attention Response Energy (HARE).}
Consider a multi-head self-attention block with heads in $H_1,H_2,\dots,H_M$.
For the $i$-th sample in a batch and the $m$-th head, let $\mathbf{A}_{i,m}\in\mathbb{R}^{N\times N}$ denote the attention map obtained through the query and key tensors, and $\mathbf{V}_{i,m}\in\mathbb{R}^{N\times d_h}$ denote the value tensor, where $d_h$ is the dimensionality of each head. 
The attention-induced response is:
\begin{equation}
\mathbf{O}_{i,m} = \mathbf{A}_{i,m}\mathbf{V}_{i,m}.
\end{equation}

We define the energy of the $m$-th head \textit{w.r.t.} the $i$-th sample as:
\begin{equation}
e_{i,m} = \|\mathbf{O}_{i,m}\|_F^2.
\end{equation}
This quantity reflects the effective strength of the attention response that is actually delivered to downstream prediction.

The batch-averaged head energy for the $m$-th head is:
\begin{equation}
\bar e_m = \frac{1}{B}\sum_{i=1}^{B} e_{i,m},
\end{equation}
where $B$ is the batch size. %We further denote
%$
%{\mathcal{E}}=\{\bar e_m\}
%$
%denote the corresponding set of head energies.

\noindent\textbf{Energy-based Head Partition.}
Based on the batch-averaged HARE, we partition attention heads into three groups for group-wise stabilization: dominant heads with the strongest responses, inactive heads with weak responses, and contextual heads in between. Let
\begin{equation}
\bar e = \frac{1}{M}\sum_{m=1}^{M} \bar e_m
\label{eq:global_head_energy}
\end{equation}
be the mean energy across all heads. 

We isolate the strongest-response head, collect weakly activated heads, and treat the remainder as an intermediate group:
\begin{equation}
%\left\{
\begin{aligned}
\mathcal H^{\uparrow} = \{\arg\max_{m}\bar e_m\},
\mathcal H^{\downarrow} = \{m\mid \bar e_m<\alpha \bar e\},
\mathcal H^{\circ} &= [M]\setminus\bigl(\mathcal H^{\uparrow}\cup\mathcal H^{\downarrow}\bigr),
\end{aligned}
%\right.
\label{eq:head_groups}
\end{equation}
where $\alpha\in(0,1)$ is a fixed threshold. 

The group-level energy for the $i$-th sample is defined as
\begin{equation}
e_i^{g}
=
\frac{1}{|\mathcal H^{g}|}
\sum_{m\in\mathcal H^{g}} e_{i,m},
\qquad
g\in\{\uparrow,\circ,\downarrow\}.
\label{eq:group_energy}
\end{equation}
This grouping provides a simple and robust basis for group-wise energy stabilization.

\noindent\textbf{Stablization Loss.}
To stabilize attention energy without collapsing all samples to the same response level, we define the target energy for each head group $g\in\{\uparrow,\circ,\downarrow\}$ based on the batch mean statistic of that group:
\begin{equation}
\mu_g = \frac{1}{B}\sum_{i=1}^{B} e_i^{g},
\end{equation}
which provides a stable estimate of the typical response scale of group $g$ in the current batch.

To this end, we calibrate each sample toward the batch-level target:
\begin{equation}
\mathcal{L}_{\mathrm{hare}}
=
\frac{1}{B}\sum_{i=1}^{B}\mathcal{M}_i
\sum_{g\in\{\uparrow,\circ,\downarrow\}}
\rho(\mu_g - e_i^{g}),
\end{equation}
where $\rho(\cdot)$ is a robust penalty function that ReLU is adopted for this purpose, and $\mathcal{M}_i\in\{0,1\}$ is a mask for selecting difficult samples.

\subsection{HARECast: Energy-regulated Forecasting}
Importantly, the HARECast formulation is general: it can be applied to both uni- and multimodal nowcasting architectures. Both unimodal and multimodal HARECast follow the same overall pipeline: an encoder first produces a latent representation and stabilizes its cross-sample attention-response energy using the proposed group-wise HARE regularization; lightweight reconstruction decoders are then used to preserve sufficient input information in the latent space; finally, a conditional diffusion model predicts future radar frames conditioned on the stabilized latent representation.

\noindent\textbf{Unimodal HARECast.}
In the unimodal setting, the encoder takes historical radar observations as input and produces a latent representation:
\begin{equation}
\mathbf{F} = E(\mathbf{X}_{\mathrm{radar}}).
\label{eq:uni_latent}
\end{equation}
To preserve input information, we reconstruct the radar observations from the same latent representation:
\begin{equation}
\hat{\mathbf{X}}_{\mathrm{radar}} = D_{\mathrm{radar}}(\mathbf{F}),
\label{eq:uni_reconstruction}
\end{equation}
with reconstruction loss
\begin{equation}
\mathcal{L}_{\mathrm{recon}}^{\mathrm{uni}}
=
\mathbb{E}\!\left[
\|\hat{\mathbf{X}}_{\mathrm{radar}}-\mathbf{X}_{\mathrm{radar}}\|_2^2
\right].
\label{eq:uni_recon_loss}
\end{equation}
The stabilized latent representation $\mathbf{F}$ is then used to condition a diffusion model for future radar prediction.

\noindent\textbf{Multimodal HARECast.}
In the multimodal setting, the encoder takes historical radar and satellite observations as input and produces a fused latent representation
\begin{equation}
\mathbf{F}_{\mathrm{fuse}} = E(\mathbf{X}_{\mathrm{radar}}\oplus \mathbf{X}_{\mathrm{sat}}),
\label{eq:fused_latent}
\end{equation}
where $\oplus$ is a concatenation operator. 
To encourage $\mathbf{F}_{\mathrm{fuse}}$ to retain the shared atmospheric structure required by both modalities, we reconstruct the radar and satellite observations through two modality-specific decoders:
\begin{equation}
\begin{aligned}
\hat{\mathbf{X}}_{\mathrm{radar}} = D_{\mathrm{radar}}(\mathbf{F}_{\mathrm{fuse}}), \quad
\hat{\mathbf{X}}_{\mathrm{sat}} = D_{\mathrm{sat}}(\mathbf{F}_{\mathrm{fuse}}).
\label{eq:reconstruction}
\end{aligned}
\end{equation}
The corresponding reconstruction objective is
\begin{equation}
\mathcal{L}_{\mathrm{recon}}^{\mathrm{multi}}
=
\mathbb{E}\!\left[
\|\hat{\mathbf{X}}_{\mathrm{radar}}-\mathbf{X}_{\mathrm{radar}}\|_2^2
+
\|\hat{\mathbf{X}}_{\mathrm{sat}}-\mathbf{X}_{\mathrm{sat}}\|_2^2
\right].
\label{eq:recon_loss}
\end{equation}
The stabilized fused representation $\mathbf{F}_{\mathrm{fuse}}$ is then used to condition a diffusion model for future radar prediction.

%\noindent\textbf{Generative Nowcasting.} Let $\epsilon_{\theta}(\cdot)$ denote the denoising network parameterized by $\theta$. Following a standard latent diffusion formulation, the model predicts future radar frames conditioned on the stabilized latent representation, and the corresponding forecasting loss in diffusion is denoted by $\mathcal{L}_{\mathrm{diff}}$.

\noindent\textbf{Generative Nowcasting.}
Let $\epsilon_{\theta}(\cdot)$ denote the denoising network parameterized by $\theta$. Following a standard latent diffusion formulation, the model predicts future radar frames conditioned on the stabilized latent representation, and the corresponding forecasting loss is denoted by $\mathcal{L}_{\mathrm{diff}}$. 

\noindent\textbf{Optimization.} 
The overall training objective is:
\begin{equation}
\mathcal{L}
=
\lambda_{1}\mathcal{L}_{\mathrm{recon}}^{(*)}
+
\lambda_{2}\mathcal{L}_{\mathrm{hare}}
+
\lambda_{3}\mathcal{L}_{\mathrm{diff}},
\label{eq:final_loss}
\end{equation}
where
$\mathcal{L}_{\mathrm{recon}}^{(*)}\in\{\mathcal{L}_{\mathrm{recon}}^{\mathrm{uni}},\mathcal{L}_{\mathrm{recon}}^{\mathrm{multi}}\}$
denotes the reconstruction loss used in the unimodal or multimodal setting, respectively. $\lambda_{1}$, $\lambda_{2}$, and $\lambda_{3}$ are hyperparameter to balance the three terms..

Overall, the framework follows a simple design principle: group-wise HARE stabilization controls cross-sample variability in attention responses, auxiliary reconstruction encourages the latent features to preserve informative structure, and latent diffusion converts the stabilized representation into final precipitation forecasting.

\section{Experiments \& Discussions}

\subsection{Experimental Settings}
\noindent\textbf{Datasets.}
For the unimodal setting, we follow the common practices~\cite{Lin_2025_CVPR} with two benchmarks, {SEVIR}~\cite{veillette2020sevir} and {MeteoNet}~\cite{larvor2020meteo}, which provides radar observations for precipitation events. We use 5 historical frames (25 minutes) to predict the next 20 frames (100 minutes). 
For the multimodal setting, we follow PiMMNet~\cite{yu2025pimmnet}, using 12 historical frames (60 minutes) to forecast the subsequent 36 frames (180 minutes) on SEVIR with radar and satellite inputs. The data distribution for the two settings are shown in Table~\ref{tab:data_distribution}. %Additional dataset details are available in the Supplementary.

\begin{table}[htbp]
\centering
\setlength{\tabcolsep}{6pt}
\caption{Dataset statistics for unimodal and multimodal settings. $N_{t r}$, $N_{v a}$ and $N_{t e}$ represent the number of samples in the training, validation, and test partitions, respectively.}
{%
\begin{tabular}{|l|cccccc|}
\hline
\multirow{2}{*}{Dataset} & \multicolumn{6}{c|}{Unimodal (100 mins precipitation evolution)} \\
\cline{2-7}
 & $N_{t r}$ & $N_{v a}$ & $N_{t e}$ & $(C, H, W)$ & $T_I$ & $T_O$  \\ 
\hline
SEVIR & 23,808 & 6,016 & 8,100 & $(1, 128,128)$ & 5 & 20 \\
MeteoNet & 6,308 & 1,310 & 1,310 & $(1,128,128)$ & 5 & 20   \\
\hline

\addlinespace[0.5em]

\hline
\multirow{2}{*}{Dataset} & \multicolumn{6}{c|}{Multimodal (180 mins precipitation evolution)} \\
\cline{2-7}
 & $N_{t r}$ & $N_{v a}$ & $N_{t e}$ & $(C, H, W)$ & $T_I$ & $T_O$  \\ 
\hline
SEVIR & 12,254 & 4,900 & 6,290 & $(2, 128,128)$ & 12 & 36 \\
\hline

\end{tabular}%
}
\label{tab:data_distribution}
\end{table}

\noindent\textbf{Evaluation Metrics.}
Following prior work~\cite{Lin_2025_CVPR}, we evaluate nowcasting using the mean Critical Success Index (CSI), Heidke Skill Score (HSS), Learned Perceptual Image Patch Similarity (LPIPS) and Structural Similarity Index (SSIM). CSI quantifies pixel-wise overlap after binarizing predictions and ground truth, while HSS measures skill relative to random chance. We report CSI-M (CSI averaged across thresholds) and CSI over the two highest thresholds to highlight performance on high-intensity precipitation, as defined by the datasets. SSIM and LPIPS assess perceptual quality and similarity. For the multimodal setting, we follow PiMMNet~\cite{yu2025pimmnet} and compute HSS and CSI at multiple pooling scales (4$\times$4 and 16$\times$16) to assess neighborhood aggregation. %We also report LPIPS for visual quality.

\noindent\textbf{Implementation Details.}
We train all models using AdamW with a learning rate of $1\times10^{-4}$. Following DDIM~\cite{song2020denoising}, we use 1{,}000 diffusion steps during training and 5 sampling steps at inference. We do not extensively tune the loss weights, and simply set $\lambda_1=\lambda_2=1$ and $\lambda_3=5$ in both unimodal and multimodal settings.
The reconstruction decoders $D_{\text{radar}}$ and $D_{\text{sat}}$ are used only during training and are disabled at inference time, introducing no additional inference overhead. All training are conducted on 2x NVIDIA H100 GPU. %More details are available in the Supplementary.

%\noindent\textbf{Implementation Details.}
%We use the AdamW optimizer with a learning rate of $1\times10^{-4}$ during training. For diffusion, we follow DDIM~\cite{song2020denoising}, using 1,000 diffusion steps during training and 5 steps for sampling. We do not perform extensive hyperparameter tuning for the loss weights. For both singe- and multimodal settings, we set $\lambda_1 = \lambda_2 = 1$ and $\lambda_3 = 5$. For unimodality baselines, we follow the AlphaPre~\cite{Lin_2025_CVPR} using radar data from the same precipitation events. For multimodal comparisons, we evaluate on the same events and report the corresponding SoTA results from PiMMNet~\cite{yu2025pimmnet}. $D_{\text{radar}}$ and $D_{\text{sat}}$ are lightweight linear layers used only during training and are disabled at inference time, incurring no additional computational overhead.
%All experiments run on NVIDIA H100 GPUs. More details are provided in Supplementary Materials.

\begin{table*}[htbp]
\setlength{\tabcolsep}{3.3pt}
\centering
\caption{Quantitative performance on SEVIR and MeteoNet for unimodal 100-minute precipitation nowcasting.}
{
\begin{tabular}{|l|cccccc|cccccc|}
\hline
\multirow{2}{*}{Method} & \multicolumn{6}{c|}{\cellcolor{gray!20}SEVIR (Unimodal)} & \multicolumn{6}{c|}{\cellcolor{gray!20}MeteoNet (Unimodal)} \\
\cline{2-13}
 & \textuparrow CSI-M & \textuparrow CSI-181 & \textuparrow CSI-219 & \textuparrow HSS & \textdownarrow LPIPS & \textuparrow SSIM & \textuparrow CSI-M & \textuparrow CSI-24 & \textuparrow CSI-32 & \textuparrow HSS & \textdownarrow LPIPS & \textuparrow SSIM \\
\hline
ConvGRU(2017)~\cite{shi2017deep} & 0.2903& 0.0879& 0.0350& 0.3619& 0.2654 &0.6100 &0.3401 &0.2990 &0.1431 &0.4667 &0.2544 &0.7833 \\
MVCD (2021)~\cite{voleti2022mcvd} & 0.2148&  0.0510 & 0.0306 & 0.2743 & 0.2170 & 0.5265& 0.2336&  0.2614&  0.1020 & 0.3393&  0.1652&  0.5414 \\
MAU(2021)~\cite{chang2021mau} & 0.3076&0.1071&0.0516&0.3863&0.3865&0.6505 & 0.3233&0.2839&0.0997&0.4452&0.3023&0.7897 \\
SimVP(2022)~\cite{gao2022simvp}  & 0.3108&0.1106&0.0517&0.3924&0.3894&0.6508&0.3351&0.3002&0.1130&0.4573&0.3410&0.7804 \\
STRPM (2022)~\cite{chang2022strpm}& 0.2512  &0.0643 & 0.0308 & 0.3277  &0.2577  &0.6513 & 0.2606  &0.2338  &0.0882 & 0.3688 & 0.2004 & 0.5996 \\
FourCastNet(2022)~\cite{pathak2022fourcastnet} &0.2686&0.0717&0.0339&0.3355&0.4216&0.5976&0.3027&0.2533&0.1085&0.4216&0.4654&0.6450\\
Earthformer(2022)~\cite{gao2022earthformer}  & 0.2892&0.0844&0.0245&0.3665&0.3921&0.6633 &0.3205&0.2884&0.1237&0.4491&0.3626&0.7772\\
PhyDNet(2020)~\cite{guen2020disentangling} &0.3017&0.1040&0.0278&0.3812&0.3697&0.6532 &0.3384&0.3194&0.1366&0.4673&0.2844&0.7823\\
EarthFarseer(2024)~\cite{wu2024earthfarsser} & 0.3004&0.0992&0.0413&0.3829&0.3825&0.6327&0.3404&0.3170&0.1372&0.4726&0.3001&0.7542\\
NowcastNet(2023)~\cite{zhang2023skilful}& 0.2791&0.0770&0.0351&0.3512&0.4041&0.6839&0.3427&0.3206&0.1598&0.4751&0.2877&0.7879  \\
PreDiff(2024)~\cite{gao2024prediff} &0.2659&0.0652&0.0237&0.3445&0.2644&0.5857&0.2969&0.2675&0.1210&0.4485&0.1431&0.7057\\
DiffCast(2024)~\cite{yu2024diffcast} &0.3050&0.1300&0.0582&0.3996 & \underline{0.2090}&0.6482 &0.3512&0.3340&0.1808&0.4846&0.1542&0.7887\\
CasCast(2024)~\cite{gong2024cascast}  & 0.2847&0.0950&0.0311&0.3559&0.2270&0.5682 & 0.3299 & 0.2982 & 0.1354 & 0.4651 & 0.1456 & 0.7553\\
FACL(2024)~\cite{yan2024fourier}  & 0.3127 & 0.1333 & 0.0636 & 0.4015 & 0.3252& 0.5660  & 0.3601 & 0.3311 & 0.1806 & 0.5013 & 0.2159  & 0.7759\\
AlphaPre(2025)~\cite{Lin_2025_CVPR}  &0.3259&0.1332&0.0545&0.4110&0.2855&\underline{0.6884}&0.3824&0.3633&0.2002&0.5164&0.1991&0.7968\\
PercpCast(2025)~\cite{fengperceptually} & 0.3201 & 0.1356 & 0.0592 & 0.4026& \textbf{0.1960} & 0.6649 & 0.3715 & 0.3641 & 0.1959 & 0.5046 & \textbf{0.1333} & 0.7881\\

DuoCast(2026)~\cite{wen2026duocast} & \underline{0.3375} & \underline{0.1818} & \textbf{0.1074} & \underline{0.4318} & 0.2263 &  0.6827 & \underline{0.3892} & \underline{0.3841} & \underline{0.2381} & \underline{0.5297} & 0.1720 & \underline{0.7981}\\

\hline

\rowcolor{lightblue}
Ours (Unimodal)  & \textbf{0.3443} & \textbf{0.1832} & \underline{0.0978} & \textbf{0.4369} & 0.2110 & \textbf{0.6922} & \textbf{0.3933}& \textbf{0.3885} & \textbf{0.2392} & \textbf{0.5301}& \underline{0.1415} & \textbf{0.8019}\\

\hline

\end{tabular}%
}
\label{tab:sota}
\end{table*}

\begin{table}[htbp]
\centering
\setlength{\tabcolsep}{0.5pt}
\caption{Quantitative performance on SEVIR for multimodal 180-minute precipitation nowcasting.}
{%
\begin{tabular}{|l|cccccc|}
\hline
\multirow{2}{*}{Method} & \multicolumn{6}{c|}{\cellcolor{gray!20}SEVIR (Multimodal)} \\
\cline{2-7}
 & \textuparrow CSI-M & \textuparrow CSI$_4$ & \textuparrow CSI$_{16}$ & \textuparrow CSI-219 & \textuparrow HSS  & \textdownarrow LPIPS  \\ 
\hline
Rain-F(2021)~\cite{choi2021rain}  & 0.201 & 0.210& 0.223& 0.011 &0.248& 0.415\\
STJointNet(2024)~\cite{zheng2024cross}& 0.213& 0.229& 0.253 & 0.011 & 0.266& 0.386  \\
PiMMNet(2025)~\cite{yu2025pimmnet} & \underline{0.238}& \underline{0.284}& \underline{0.395}& \underline{0.037} &\underline{ 0.308}&  \textbf{0.243} \\
\hline
\rowcolor{lightblue}
Ours (Multimodal) & \textbf{0.254}& \textbf{0.316}& \textbf{0.421}& \textbf{0.053} & \textbf{0.315}& \underline{0.246}  \\

\hline
\end{tabular}%
}
\label{tab:multimodal-sota}
\end{table}

\subsection{Overall Performance}
\noindent\textbf{Quantitative Analysis.}
Tables~\ref{tab:sota} and~\ref{tab:multimodal-sota} compare our method with recent state-of-the-art unimodal and multimodal nowcasting models, respectively. Overall, our method delivers consistent improvements across  datasets and unimodal/multimodal settings.

\begin{figure}[t]
  \centering
   \includegraphics[width=\linewidth]{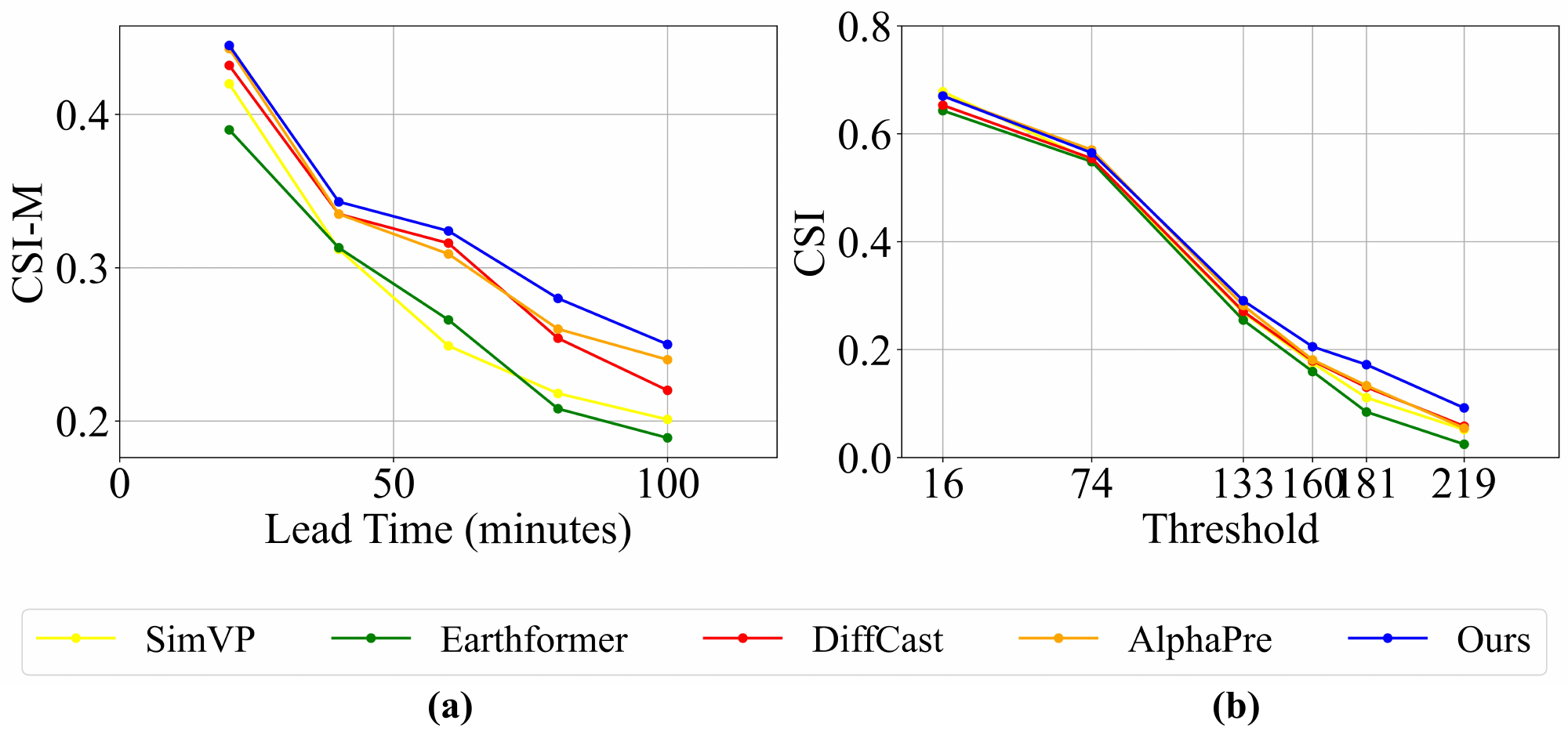}
\caption{
Forecasting performance on SEVIR (Unimodal) under varying forecast horizons and precipitation intensity thresholds. (a) Performance across forecast lead times. (b) Performance across precipitation intensity thresholds.}
   \label{fig:time_threshold}
\end{figure}

Compared with existing unimodal methods, our approach improves CSI-M by 2.0-2.4\% and HSS by 0.1-1.2\%, while remaining competitive on perceptual and structural metrics such as LPIPS and SSIM. The gains is pronounced under heavier rainfall conditions: at higher intensity thresholds, our method achieves approximately 0.8-1.1\% relative improvements on SEVIR and MeteoNet in CSI, suggesting a stronger ability to capture intense and localized precipitation patterns. 
Figure~\ref{fig:time_threshold} further illustrates this trend. In the subplot (a), which reports CSI over different prediction lead times, our model consistently outperforms recent baselines at nearly all horizons. In the subplot (b), which reports CSI under different intensity thresholds, our method achieves stronger performance overall, with especially clear advantages at higher thresholds. This shows that the proposed method is particularly effective in challenging forecasting regimes.

Compared with existing multimodal methods, our approach achieves a 7\% improvement in CSI-M and a 2\% gain in HSS, again demonstrating clear overall advantages. The improvements are especially notable on $\text{CSI}_{4}\text{-M}$ and $\text{CSI}_{16}\text{-M}$, indicating better performance on stronger precipitation events. This is consistent with our motivation that stabilizing attention-response energy is particularly beneficial when forecasting localized and high-intensity rainfall.

\noindent\textbf{Qualitative Analysis.}
\begin{figure}[t]
  \centering
   \includegraphics[width=\linewidth]{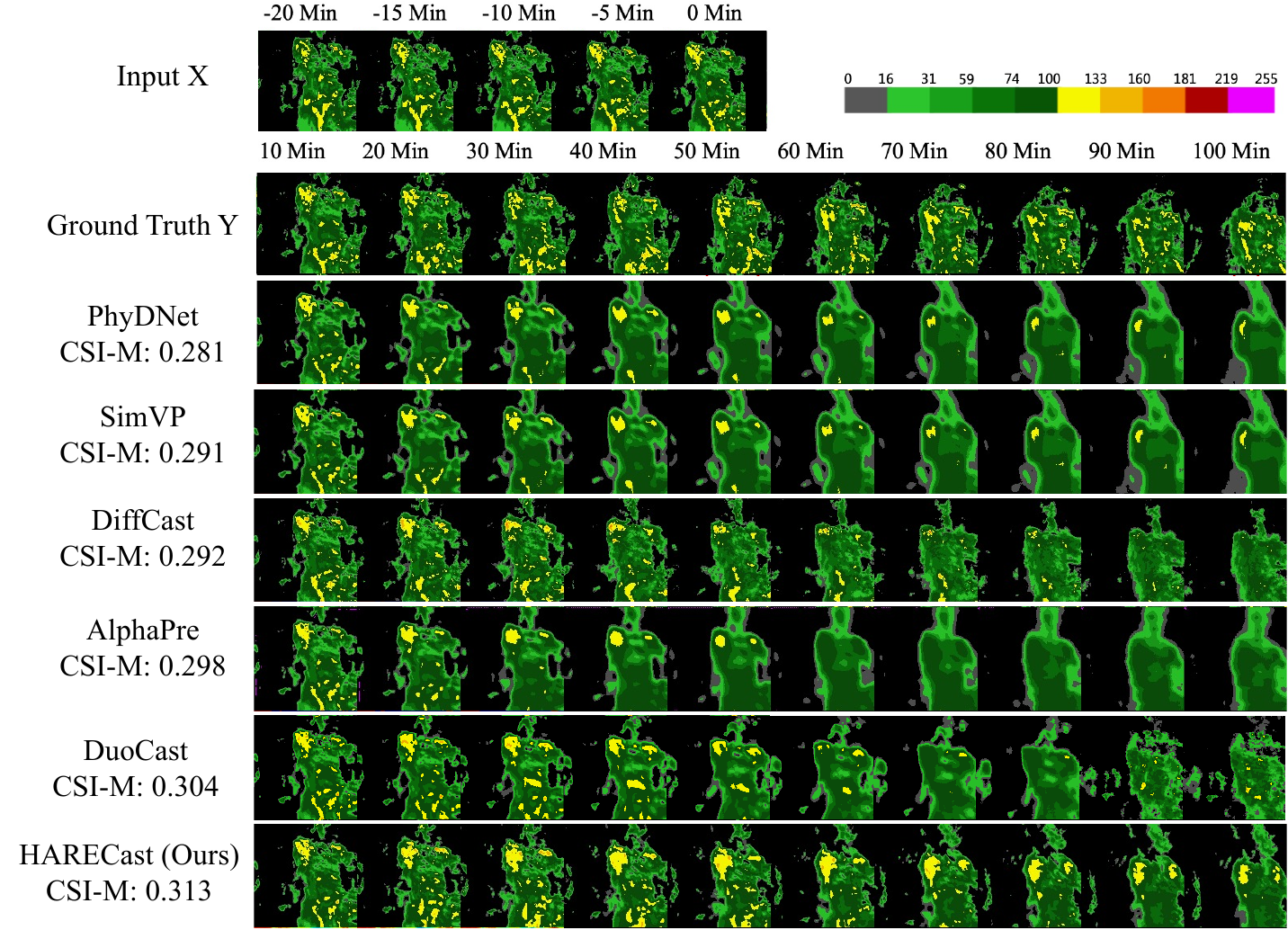}
   \caption{
Qualitative comparison on a SEVIR event (unimodal). HARECast produces forecasts with finer precipitation structures and more coherent temporal evolution than state-of-the-art methods. %\textcolor{lightgraytext}{Zoom in to see more details.}
   }
   \label{fig:qualitative}
\end{figure}
Figure~\ref{fig:qualitative} and Figure~\ref{fig:qualitative2} show qualitative comparisons with unimodal state-of-the-art methods on representative precipitation events from SEVIR and MeteoNet, respectively. Across both datasets, existing methods struggle to simultaneously preserve coherent large-scale evolution and fine-scale precipitation structure over long forecasting horizons. SimVP~\cite{gao2022simvp} and PhyDNet~\cite{guen2020disentangling} recover the precipitation pattern reasonably well at short lead times, but their predictions become progressively blurrier and lose local detail as the horizon increases. DiffCast~\cite{yu2024diffcast} and DuoCast~\cite{wen2026duocast} better retain fine-scale structures, yet still show noticeable deviations in long-range evolution.
For example, DiffCast produces sharper local structures, but tends to deviate from the ground-truth evolution, such as weakening the central medium-intensity precipitation region on SEVIR and consequently underestimating its intensity. In contrast, HARECast consistently preserves both the overall precipitation dynamics and local structural details, especially along medium-intensity boundaries and edge contours.

\begin{table}[htbp]
\centering
\renewcommand{\arraystretch}{1}
\setlength{\tabcolsep}{1.6pt}
\caption{Ablation studies of HARE stabilization (upper block) and group partition (lower block), where “w/o” denotes removing the module.}
{%
\begin{tabular}{|l|cccccc|}
\hline
\multirow{2}{*}{Method} & \multicolumn{6}{c|}{\cellcolor{gray!20}SEVIR (Unimodal)} \\
\cline{2-7}
 & \textuparrow CSI-M & \textuparrow CSI-181 & \textuparrow CSI-219 & \textuparrow HSS & \textdownarrow LPIPS  & \textuparrow SSIM  \\ 
\hline
HARECast & \textbf{0.344} & \textbf{0.183} & \textbf{0.098} & \textbf{0.437} & \textbf{0.211} & \textbf{0.692}\\
w/o G-HARE & 0.315 &  0.132 & 0.051 & 0.408 & 0.255 & 0.667 \\
\hline

HARECast & \textbf{0.344} & \textbf{0.183} & \textbf{0.098} & \textbf{0.437} & \textbf{0.211} & \textbf{0.692}\\
w/o Grouping & 0.318 & 0.155 & 0.070 & 0.413 & 0.232 & 0.652 \\

\hline
\end{tabular}%
}
\label{tab:ablation_1}
\end{table}

Figure~\ref{fig:qualitative3} presents qualitative comparisons with multimodal state-of-the-art methods on a representative SEVIR event. Across the full 30--180 minute forecasting horizon, HARECast produces forecasts that remain consistently closer to the radar ground truth. At early lead times, both STJointNet~\cite{zheng2024cross} and PiMMNet~\cite{yu2025pimmnet} capture the overall rainband geometry reasonably well. However, STJointNet quickly becomes overly smooth, leading to expanded precipitation regions and blurred internal structures, while PiMMNet preserves more local texture but introduces fragmented noisy patches and less coherent boundaries. By comparison, HARECast more faithfully maintains both the large-scale orientation of the rainband and the fine-scale high-intensity core, particularly the elongated yellow--red precipitation spine visible from 30 to 120 minutes. As forecasting becomes more difficult at longer lead times, HARECast continues to preserve a more realistic shrinking and displacement trend, together with cleaner boundaries and a more accurate intensity distribution. %, whereas the baselines tend either to over-smooth the precipitation field or to fragment the high-intensity regions.

%Figure~\ref{fig:qualitative} and Figure~\ref{fig:qualitative2} present qualitative comparisons with several unimodal state-of-the-art models on representative precipitation events from SEVIR and MeteoNet, respectively.   Across both datasets, existing methods generally struggle to jointly preserve the correct large-scale evolution and fine-scale precipitation details. SimVP~\cite{gao2022simvp} and PhyDNet~\cite{guen2020disentangling} capture the coarse precipitation pattern reasonably well at early lead times, but their predictions become increasingly blurry and lose local detail over time. AlphaPre~\cite{Lin_2025_CVPR}, DiffCast\cite{yu2024diffcast} and DuoCast~\cite{wen2026duocast} better preserve fine structures, yet often exhibit noticeable errors in long-range evolution; for example, on SEVIR, AlphaPre predicts a weakening band of medium-intensity precipitation (100–133) at 60 minutes. DiffCast recovers sharper local structures, but its forecasts tend to drift from the correct trend, such as weakening the central medium-intensity precipitation region on SEVIR and thus causing underestimation. In contrast, HARECast consistently captures both the large-scale precipitation dynamics and fine-scale local structures, with especially faithful preservation of medium-intensity edges and boundary contours.

\begin{figure}[t]
  \centering
   \includegraphics[width=\linewidth]{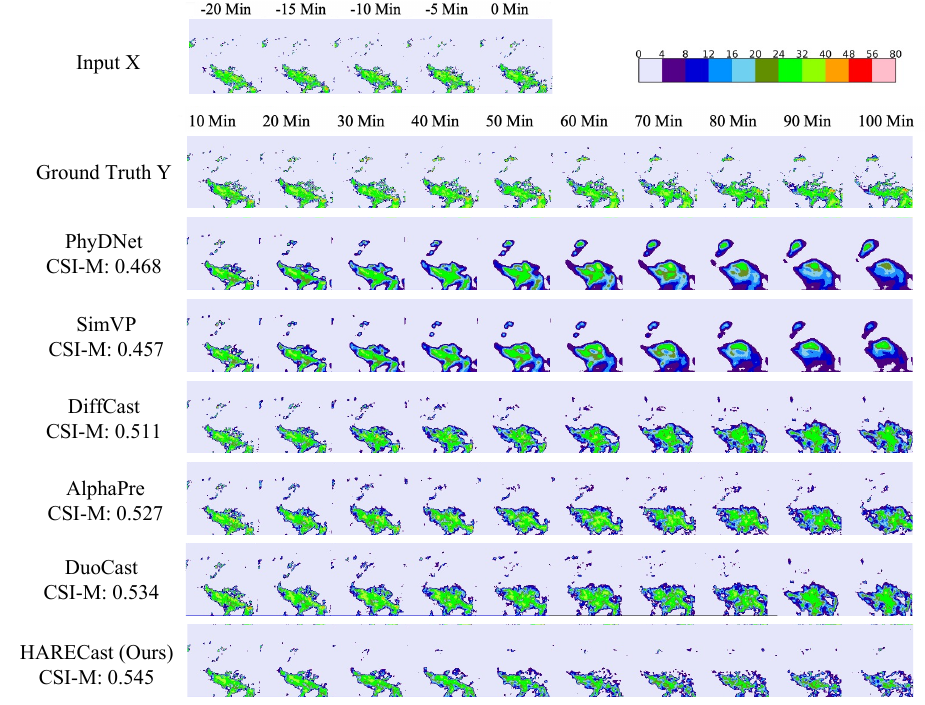}
   \caption{
   %Qualitative comparison on a MeteoNet event. HARECast recovers finer precipitation boundary structure and maintains more local details than SoTA baselines.
   Qualitative comparison on a MeteoNet event (unimodal). HARECast preserves finer precipitation boundaries and more local details than state-of-the-art methods. \textcolor{lightgraytext}{Zoom in to see more details.}
   }
   \label{fig:qualitative2}
\end{figure}

\begin{figure}[t]
  \centering
   \includegraphics[width=1\linewidth]{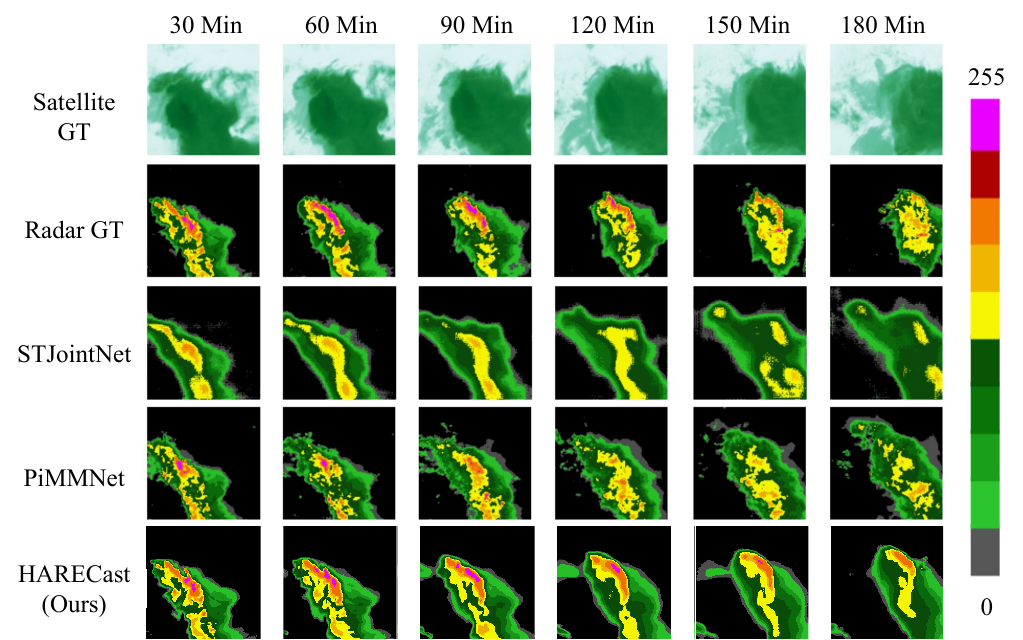}
   \caption{
   %Qualitative comparison on a SEVIR multimodal event. HARECast better preserves the large-scale rainband evolution the elongated high-intensity core and  sharper boundary structures than SoTA baselines.
   Qualitative comparison on a SEVIR event (multimodal). Compared with state-of-the-art baselines, HARECast better preserves the large-scale evolution of the rainband, the elongated high-intensity core, and fine boundary details. %\textcolor{lightgraytext}{Zoom in to see more details.}
   }
   \label{fig:qualitative3}
\end{figure}

\subsection{Ablation Study}

\noindent\textbf{Effectiveness of Group-wise HARE Stabilization.}
%As shown in Table~\ref{tab:ablation_1}, We compare the full model with versions excluding the group-wise HARE stabilization. Excluding the stabilization results in a significant drop in CSI, HSS, and SSIM, particularly at higher intensity thresholds, underscoring its contribution to forecasting accuracy. Figure~\ref{fig:ablation} visualizes attention energies across different heads with and without the energy stabilization mechanism. With energy stabilization, attention head energies are better aligned across samples, confirming that the mechanism reduces variability caused by cross-sample differences and supports more accurate and stable precipitation predictions.
Table~\ref{tab:ablation_1} upper block shows that removing HARE stabilization degrades all evaluation metrics. The drop is especially pronounced on higher threshold-based CSI metrics, indicating that the proposed stabilization is particularly beneficial for sharp and intense precipitation patterns. Figure~\ref{fig:ablation2} further shows that HARE stabilization makes head-wise attention-response energies more consistent across samples, supporting its role in suppressing cross-sample fluctuations.

\begin{figure}[t]
  \centering
   \includegraphics[width=\linewidth]{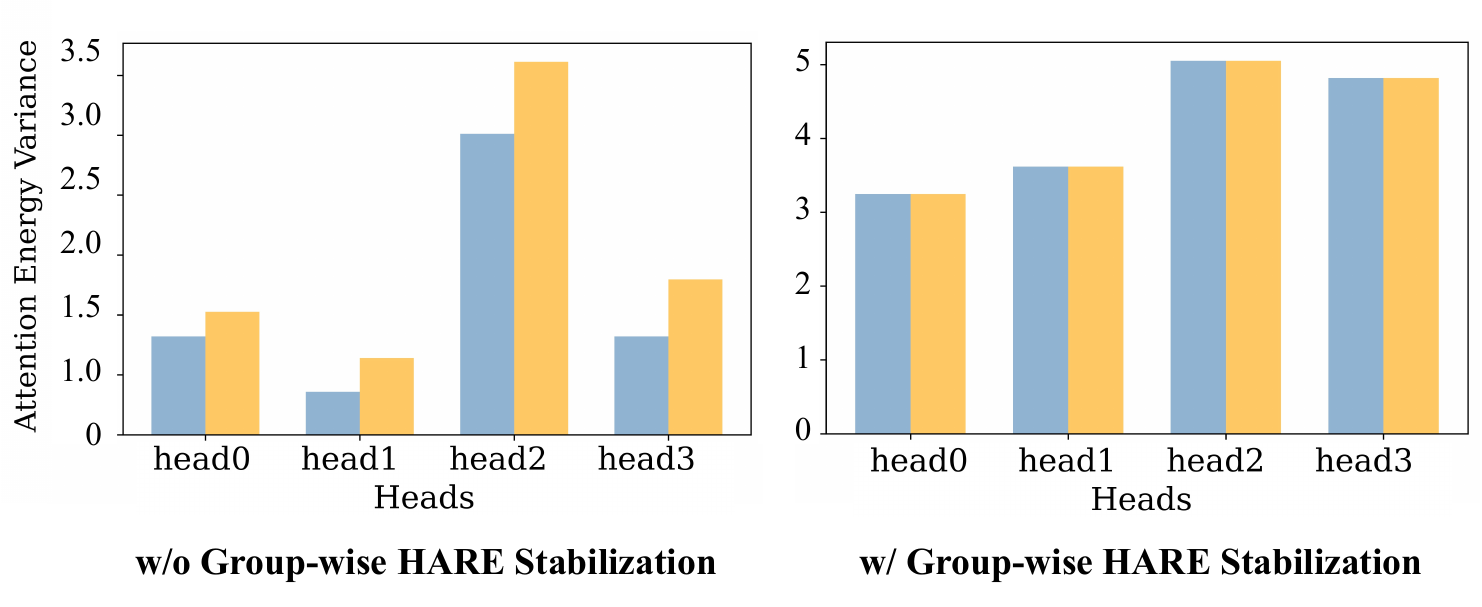}
   \caption{Effect of HARE stabilization on head-wise attention-response energy across samples. Without HARE stabilization, the \textcolor{myblue}{accurately predicted batch} consistently exhibits smaller variance than the \textcolor{myorange}{inaccurately predicted batch}. With HARE stabilization, the variance gap between the \textcolor{myblue}{accurately predicted batch} and the \textcolor{myorange}{inaccurately predicted batch} becomes much smaller, showing that HARE effectively regularizes cross-sample attention-response energy and promotes more stable attention behavior.}
   \label{fig:ablation2}
\end{figure}

% \begin{table}[htbp]
% \centering
% \renewcommand{\arraystretch}{1}
% \setlength{\tabcolsep}{2.2pt}
% \caption{Ablation study of group partition.}
% {%
% \begin{tabular}{|l|cccccc|}
% \hline
% \multirow{2}{*}{Method} & \multicolumn{6}{c|}{\cellcolor{gray!20}SEVIR (Unimodal)} \\
% \cline{2-7}
%  & \textuparrow CSI-M & \textuparrow CSI-181 & \textuparrow CSI-219 & \textuparrow HSS & \textdownarrow LPIPS  & \textuparrow SSIM  \\ 
% \hline

% HARECast & \textbf{0.344} & \textbf{0.183} & \textbf{0.098} & \textbf{0.437} & \textbf{0.211} & \textbf{0.692}\\
% w/o Grouping & 0.318 & 0.155 & 0.070 & 0.413 & 0.232 & 0.652 \\
% \hline
% \end{tabular}%
% }
% \label{tab:ablation_3}
% \end{table}

\noindent\textbf{Effectiveness of Group Partition.}
We compare the proposed head grouping design with a variant without grouping, where all heads are stabilized jointly using a shared target. As shown in Table~\ref{tab:ablation_1} lower block, grouping heads into strong-, intermediate-, and weak-response categories consistently improves all evaluation metrics, with especially clear gains on stricter threshold-based CSI metrics. This suggests that head grouping enables more precise energy control than treating all heads uniformly.

\subsection{Sensitivity Analysis}

\noindent\textbf{Effect of Batch Size.}
Since $\mathcal{L}_{\text{hare}}$ is built from batch-level energy statistics, the batch size directly affects the quality of the group mean estimate $\mu_g$ used as the stabilization target. With a smaller batch, the estimated target is computed from fewer samples and therefore becomes more sensitive to sample-specific fluctuations, making it less representative of the underlying response level of the group. As shown in Table~\ref{tab:ablation_2} upper block, this leads to consistent degradation across all metrics as the batch size decreases from 8 to 2. The effect is also visible in Figure~\ref{fig:ablation3}: larger batches yield smoother optimization of $\mathcal{L}_{\text{hare}}$, whereas smaller batches introduce noticeably stronger oscillations. These results suggest that HARE stabilization relies on sufficiently stable batch statistics; otherwise, noisy target estimates weaken the intended alignment of attention-response energy across samples and reduce the effectiveness of the stabilization mechanism.

%Since the batch-level energy statistics is used in group-wise HARE stabilization loss $\mathcal{L}_\text{hare}$, the batch size controls how much diversity in attention energy the model can see at each update. We therefore vary the batch size to assess its impact on the encoder, specifically the energy-routing mechanism. 
%To maintain training stability, the main batch size remains while a reduced batch is used solely to calculate the target energy  $\mu_{g}$, when computing the $\mathcal{L}_\text{hare}$ loss. 
%As shown in Table~\ref{tab:ablation_2}, reducing the batch size leads to a clear performance drop in CSI, HSS, LPIPS, and SSIM. As the batch size decreases, the diversity of attention energies within each batch shrinks, making the batch-level energy targets $\mu_{g}$ less representative and weakening the contrast between high and low energy samples. Consequently, HARE stabilization becomes less effective at stabilizing attention behaviors, which in turn degrades the predictions. Additionally, the left subplot in Figure~\ref{fig:ablation3} shows the training dynamics of the stabilization loss under different batch sizes. With batch size 8, shown as the pink line, the loss decreases smoothly and steadily, indicating stable energy-routing optimization. In contrast, batch size 4, shown as the blue line, exhibits noticeably larger fluctuations, reflecting noisier attention energy estimates and less reliable target energy $\mu_{g}$ statistics. This further supports that a sufficiently large batch is important for stable learning of the energy regulation mechanism.

\begin{table}[htbp]
\centering
\renewcommand{\arraystretch}{1}
\setlength{\tabcolsep}{3pt}
\caption{ Effect of batch size (BS) for computing HARE target statistics (upper block) and loss weights $(\lambda_1,\lambda_2,\lambda_3)$ for HARECast on SEVIR (lower block).}
{%
\begin{tabular}{|l|cccccc|}
\hline
\multirow{2}{*}{Method} & \multicolumn{6}{c|}{\cellcolor{gray!20}SEVIR (Unimodal)} \\
\cline{2-7}
 & \textuparrow CSI-M & \textuparrow CSI-181 & \textuparrow CSI-219 & \textuparrow HSS & \textdownarrow LPIPS  & \textuparrow SSIM  \\ 

\hline
BS = 2 & 0.318 & 0.156 & 0.063 & 0.417 & 0.228 & 0.671 \\
BS = 4 & 0.326 & 0.159 & 0.078 & 0.421 & 0.217 & 0.676 \\
BS = 8 & \textbf{0.344} & \textbf{0.183} & \textbf{0.098} & \textbf{0.437} & \textbf{0.211} & \textbf{0.692}\\
\hline

 (5,1,1)  & 0.309 & 0.126 & 0.075 & 0.402 & 0.254 & 0.669\\ 
(1,5,1)   & 0.313 & 0.149 & 0.079 & 0.415 & 0.244 & 0.650 \\ 
(1,1,5)  & \textbf{0.344} & \textbf{0.183} & \textbf{0.098} & \textbf{0.437} & \textbf{0.211} & \textbf{0.692}\\ 
\hline

\end{tabular}%
}
\label{tab:ablation_2}
\end{table}

\begin{figure}[t]
  \centering
   \includegraphics[width=\linewidth]{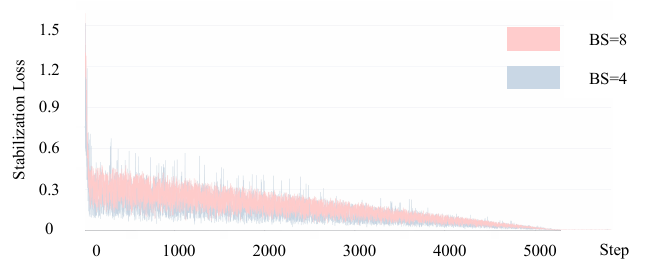}
   \caption{Training dynamics of $\mathcal{L}_{\text{hare}}$ w/ different batch sizes.}
   \label{fig:ablation3}
\end{figure}

\noindent\textbf{Sensitivity to Loss Weights.}
We evaluate the sensitivity of HARECast to the loss weights $\lambda_1$, $\lambda_2$, and $\lambda_3$, which control $\mathcal{L}_{\text{recon}}$, $\mathcal{L}_{\text{hare}}$, and $\mathcal{L}_{\text{diff}}$, respectively. As shown in Table~\ref{tab:ablation_2} lower block, the model remains reasonably stable across different weight configurations, while $(\lambda_1,\lambda_2,\lambda_3)=(1,1,5)$ gives the best overall performance. In contrast, assigning a relatively smaller weight to the diffusion objective leads to consistent degradation across metrics. This suggests that, although HARECast is not overly sensitive to moderate changes in the loss weights, sufficiently emphasizing the forecasting objective remains important for the best performance.

\subsection{Model Complexity}
We analyze the complexity of HARECast on SEVIR for 100-minute forecasting. HARECast has 90.88M parameters and requires 13.51 TFLOPs, which is comparable to recent high-performing methods such as AlphaPre and DuoCast in model size, while being substantially more efficient than DiffCast in FLOPs. Although it is not the lightest model, its cost remains practical: on a single NVIDIA RTX 4090 GPU, HARECast takes 0.86 seconds per sample to generate 20 future frames. This demonstrates that HARECast achieves strong forecasting performance with practical inference efficiency.

\section{Conclusion}

We identify cross-sample instability of attention-response energy as an important and previously underexplored source of instability for precipitation nowcasting. Based on this insight, we propose \textbf{HARECast}, a head-wise attention response energy-regulated framework that stabilizes attention responses through group-wise regularization. Extensive experiments show that this principle leads to strong empirical gains on benchmark datasets. %More broadly, our findings suggest that, beyond improving model expressiveness, explicitly controlling the stability of internal attention responses is a promising direction for building more reliable nowcasting systems.

\begin{acks}
This work was supported by Ant Group. 
\end{acks}

%%
%% The acknowledgments section is defined using the "acks" environment
%% (and NOT an unnumbered section). This ensures the proper
%% identification of the section in the article metadata, and the
%% consistent spelling of the heading.
%\begin{acks}
%To Robert, for the bagels and explaining CMYK and color spaces.
%\end{acks}

%%
%% The next two lines define the bibliography style to be used, and
%% the bibliography file.
\bibliographystyle{ACM-Reference-Format}
\bibliography{sample-base}

\clearpage

%%
%% If your work has an appendix, this is the place to put it.
\appendix

\setcounter{figure}{8}
\setcounter{table}{8}

\section{Further Evidence of Motivation}

We argue that the \emph{stability of attention responses across samples} has not been adequately examined in precipitation nowcasting. In this task, samples can vary markedly in precipitation intensity, spatial coverage, motion patterns, and cross-source consistency~\cite{yu2025integrating,zhang2023skilful}. Such heterogeneity can lead to substantial cross-sample variation in attention responses, especially in multimodal forecasting where inputs from different sources introduce additional variability. To support this claim, we first provide additional statistical evidence of cross-sample variability in both uni- and multimodal settings. We then empirically show that this increased variance can directly induce large fluctuations in attention responses.

Table~\ref{tab:dataset_variance} provides statistical evidence for this phenomenon. As a reference, we first report the cross-sample variance of several general image datasets, including ImageNet~\cite{deng2009imagenet} and COCO~\cite{lin2014microsoft}. Notably, precipitation data exhibits cross-sample variance of a comparable scale, and in some cases even larger, than these general vision datasets. This is particularly striking because precipitation nowcasting is not a generic visual recognition problem: its variability must be modeled under strict spatiotemporal continuity and physical consistency constraints. As a result, cross-sample variance in meteorological data is inherently more challenging for sequential forecasting.

This effect becomes even more pronounced in the multimodal setting. For SEVIR~\cite{veillette2020sevir}, multimodal inputs exhibit substantially higher cross-sample variance than their unimodal counterparts. The variance increases from $19.1\times10^{-3}$ to $51.9\times10^{-3}$, far exceeding the variance of ImageNet and COCO. The result indicates that introducing additional modalities amplifies sample heterogeneity, which in turn makes stable attention modeling more difficult. This observation further supports our motivation that attention-response stability is especially important in multimodal precipitation nowcasting.

We next examine whether this input variability is reflected in the internal behavior of attention-based forecasting models. Specifically, we evaluate each predicted batch on PiMMNet~\cite{yu2025pimmnet} using CSI-M, and divide all batches into two groups: accurately predicted batches, whose CSI-M is above the average over all predicted batches, and inaccurately predicted batches, whose CSI-M is below that average. For each group, we visualize the cross-sample variance of head-wise attention-response energy across layers and heads. As shown in Figure~1(a) of the main manuscript, accurately predicted batches consistently exhibit smaller attention-energy variance, whereas inaccurately predicted batches show much larger fluctuations across heads and layers.

\begin{table}[htbp]
\centering
\setlength{\tabcolsep}{11pt}
\caption{The cross-sample variance in different datasets and settings.}
{%
\begin{tabular}{|l|c|}
\hline  Dataset & Cross-Sample Variance ($10^{-3}$) \\
\hline ImageNet & 18.4 \\
% \hline CIFAR100 & 21.6 \\
\hline COCO & 14.5 \\
\hline SEVIR (Unimodal) & 19.1 \\
\hline SEVIR (Multimodal) & 51.9 \\
% \hline MeteoNet (Unimodal) & 2.6 \\
% \hline MeteoNet (Multimodal) & 3.8 \\
\hline
\end{tabular}%
}
\label{tab:dataset_variance}
\end{table}

Finally, we provide a theoretical analysis showing that cross-sample variability in the input can propagate through self-attention and induce variance in attention responses, which further enlarges a lower bound on prediction error. Together, these results indicate that reliable nowcasting requires not only expressive attention, but also \emph{stable attention responses across samples}. Detailed proofs are provided in the next section.

\section{Foundations of Attention Response Energy}
In this section, we provide proofs for the results in Section 3.2 From Attention Variability to Prediction Error in the manuscript.

\noindent\textbf{Notation.}
For a random vector or matrix $\mathbf{Z}$, its total variance is defined by
\begin{equation}
\mathrm{Var}(\mathbf{Z})
:=
\mathbb{E}\bigl\|\mathbf{Z}-\mathbb{E}[\mathbf{Z}]\bigr\|^2.
\label{eq:supp_total_variance}
\end{equation}
Equivalently,
\begin{equation}
\mathrm{Var}(\mathbf{Z})
=
\mathrm{tr}\!\bigl(\mathrm{Cov}(\mathrm{vec}(\mathbf{Z}))\bigr).
\end{equation}
For scalar random variables, this reduces to the usual variance.

\begin{assumption}[Variance non-degeneracy of self-attention]
\label{assump:supp_attn_var}
Let $F(\mathbf{X})$ denote the self-attention mapping induced by the queries and keys, where $\mathbf{X}$ is the input feature map. We assume that, on the data manifold, the self-attention mapping does not collapse cross-sample variability. Specifically, there exists a constant $c_F>0$ such that
\begin{equation}
\mathrm{Var}\!\bigl(F(\mathbf{X})\bigr)
\ge
c_F^2\,\mathrm{Var}(\mathbf{X}).
\label{eq:supp_attn_var_assumption}
\end{equation}
\end{assumption}

\begin{lemma}[Variance propagation through the prediction head]
\label{lem:supp_linear_var}
Let
\begin{equation}
\hat{\mathbf{Y}}=\mathbf{W}F(\mathbf{X})+\mathbf{b},
\label{eq:supp_linear_predictor}
\end{equation}
and assume that the linear head is non-degenerate with minimum singular value
\begin{equation}
\sigma_{\min}(\mathbf{W})\ge c_G>0.
\label{eq:supp_head_nondegenerate}
\end{equation}
Then
\begin{equation}
\mathrm{Var}(\hat{\mathbf{Y}})
\ge
c_G^2\,\mathrm{Var}(F(\mathbf{X}))
\ge
c_G^2 c_F^2\,\mathrm{Var}(\mathbf{X}).
\label{eq:supp_linear_var_bound}
\end{equation}
\end{lemma}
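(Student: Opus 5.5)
The plan is to work directly from the definition of total variance, $\mathrm{Var}(\mathbf{Z})=\mathbb{E}\|\mathbf{Z}-\mathbb{E}[\mathbf{Z}]\|^2$, and use linearity of expectation. First note that $\mathbb{E}[\hat{\mathbf{Y}}]=\mathbf{W}\,\mathbb{E}[F(\mathbf{X})]+\mathbf{b}$, since $\mathbf{W}$ and $\mathbf{b}$ are deterministic. The centered prediction is then
\[
\hat{\mathbf{Y}}-\mathbb{E}[\hat{\mathbf{Y}}]=\mathbf{W}\bigl(F(\mathbf{X})-\mathbb{E}[F(\mathbf{X})]\bigr),
\]
so the bias $\mathbf{b}$ drops out entirely. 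Writing $\mathbf{Z}:=F(\mathbf{X})-\mathbb{E}[F(\mathbf{X})]$, we obtain $\mathrm{Var}(\hat{\mathbf{Y}})=\mathbb{E}\|\mathbf{W}\mathbf{Z}\|^2$.

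The key step is the pointwise inequality $\|\mathbf{W}\mathbf{z}\|\ge\sigma_{\min}(\mathbf{W})\|\mathbf{z}\|$, which holds for every vector $\mathbf{z}$ in the input space of $\mathbf{W}$.
\begin{itemize}
\item To justify it, use the SVD (or the Rayleigh quotient of $\mathbf{W}^\top\mathbf{W}$): $\|\mathbf{W}\mathbf{z}\|^2=\mathbf{z}^\top\mathbf{W}^\top\mathbf{W}\mathbf{z}\ge\lambda_{\min}(\mathbf{W}^\top\mathbf{W})\|\mathbf{z}\|^2=\sigma_{\min}(\mathbf{W})^2\|\mathbf{z}\|^2$.
\item For this to be meaningful, $\sigma_{\min}$ must be the smallest of the $n$ singular values of $\mathbf{W}$, where $n$ is the input dimension. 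Requiring it to be positive implicitly means $\mathbf{W}$ has full column rank (output dimension at least input dimension). I would state this interpretation explicitly.
\item If $F(\mathbf{X})$ is a matrix and $\mathbf{W}$ acts by left multiplication, apply the bound column by column and sum. This gives $\|\mathbf{W}\mathbf{Z}\|_F^2\ge\sigma_{\min}(\mathbf{W})^2\|\mathbf{Z}\|_F^2$. Equivalently, one can vectorize and use $\mathbf{I}\otimes\mathbf{W}$, which has the same singular values as $\mathbf{W}$, with multiplicities.
\end{itemize}

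Taking expectations of the pointwise inequality gives $\mathrm{Var}(\hat{\mathbf{Y}})\ge c_G^2\,\mathbb{E}\|\mathbf{Z}\|^2=c_G^2\,\mathrm{Var}(F(\mathbf{X}))$. The second inequality in \eqref{eq:supp_linear_var_bound} then follows immediately from Assumption~\ref{assump:supp_attn_var}, multiplied by $c_G^2\ge 0$.

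The only real obstacle is not computational. It is making the dimension and shape conventions for $\mathbf{W}$ precise, so that $\sigma_{\min}(\mathbf{W})>0$ genuinely controls $\|\mathbf{W}\mathbf{z}\|$ from below on the whole space where $F(\mathbf{X})$ lives. This requires the full-column-rank reading described above. For a wide $\mathbf{W}$ with a nontrivial kernel, the lower bound fails.
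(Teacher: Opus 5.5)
Your proposal is correct and follows the paper's proof essentially step for step: center to eliminate $\mathbf{b}$, write $\mathrm{Var}(\hat{\mathbf{Y}})=\mathbb{E}\|\mathbf{W}\mathbf{Z}\|^2$, apply $\|\mathbf{W}\mathbf{Z}\|\ge\sigma_{\min}(\mathbf{W})\|\mathbf{Z}\|$, take expectations, and chain with Assumption~\ref{assump:supp_attn_var}. Your caveat is a legitimate precision that the paper's ``for any matrix $\mathbf{U}$'' glosses over: the singular-value bound needs $\mathbf{W}$ to have full column rank, with $\sigma_{\min}$ taken over all input-dimension singular values, and it fails for a wide $\mathbf{W}$ with nontrivial kernel.
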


\begin{proof}
Since adding a constant does not affect variance,
\begin{equation}
\mathrm{Var}(\hat{\mathbf{Y}})
=
\mathrm{Var}\!\bigl(\mathbf{W}F(\mathbf{X})\bigr).
\end{equation}
Let
\begin{equation}
\mathbf{Z}:=F(\mathbf{X})-\mathbb{E}[F(\mathbf{X})].
\end{equation}
Then
\begin{equation}
\hat{\mathbf{Y}}-\mathbb{E}[\hat{\mathbf{Y}}]
=
\mathbf{W}\mathbf{Z},
\end{equation}
and therefore
\begin{equation}
\mathrm{Var}(\hat{\mathbf{Y}})
=
\mathbb{E}\|\mathbf{W}\mathbf{Z}\|^2.
\label{eq:supp_var_head_start}
\end{equation}

For any matrix $\mathbf{U}$, the singular value bound gives
\begin{equation}
\|\mathbf{W}\mathbf{U}\|
\ge
\sigma_{\min}(\mathbf{W})\,\|\mathbf{U}\|.
\end{equation}
Applying this to $\mathbf{U}=\mathbf{Z}$ and squaring yields
\begin{equation}
\|\mathbf{W}\mathbf{Z}\|^2
\ge
\sigma_{\min}(\mathbf{W})^2\,\|\mathbf{Z}\|^2
\ge
c_G^2\,\|\mathbf{Z}\|^2.
\end{equation}
Taking expectation and using \eqref{eq:supp_var_head_start},
\begin{equation}
\mathrm{Var}(\hat{\mathbf{Y}})
\ge
c_G^2\,\mathbb{E}\|\mathbf{Z}\|^2
=
c_G^2\,\mathrm{Var}(F(\mathbf{X})).
\end{equation}
Combining this with Assumption~\ref{assump:supp_attn_var},
\begin{equation}
\mathrm{Var}(F(\mathbf{X}))
\ge
c_F^2\,\mathrm{Var}(\mathbf{X}),
\end{equation}
gives
\begin{equation}
\mathrm{Var}(\hat{\mathbf{Y}})
\ge
c_G^2 c_F^2\,\mathrm{Var}(\mathbf{X}).
\end{equation}
\end{proof}

\begin{lemma}[Lower bound on prediction error]
\label{lem:supp_mse_lower}
Define the mean squared error (MSE) as
\begin{equation}
\mathrm{MSE}
=
\mathbb{E}\bigl\|\mathbf{Y}-\hat{\mathbf{Y}}\bigr\|^2.
\label{eq:supp_mse_def}
\end{equation}
Then
\begin{equation}
\mathrm{MSE}
\ge
\bigl\|\mathbb{E}[\hat{\mathbf{Y}}]-\mathbb{E}[\mathbf{Y}]\bigr\|^2
+
\left(
\mathrm{Var}(\hat{\mathbf{Y}})^{1/2}
-
\mathrm{Var}(\mathbf{Y})^{1/2}
\right)^2.
\label{eq:supp_mse_lower_bound}
\end{equation}
\end{lemma}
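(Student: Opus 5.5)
We prove Lemma~\ref{lem:supp_mse_lower} with a bias--variance decomposition followed by a Cauchy--Schwarz bound on the cross-covariance term. Write $\mu_{\hat Y}:=\mathbb{E}[\hat{\mathbf{Y}}]$ and $\mu_Y:=\mathbb{E}[\mathbf{Y}]$, and center both variables: $\tilde{\mathbf{Y}}:=\mathbf{Y}-\mu_Y$ and $\tilde{\hat{\mathbf{Y}}}:=\hat{\mathbf{Y}}-\mu_{\hat Y}$. Then
\begin{equation*}
\hat{\mathbf{Y}}-\mathbf{Y}=(\mu_{\hat Y}-\mu_Y)+(\tilde{\hat{\mathbf{Y}}}-\tilde{\mathbf{Y}}).
\end{equation*}
Expand the squared Frobenius/Euclidean norm. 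The cross term between the deterministic vector $\mu_{\hat Y}-\mu_Y$ and the zero-mean random part has expectation zero. This gives the exact identity
\begin{equation*}
\mathrm{MSE}=\|\mu_{\hat Y}-\mu_Y\|^2+\mathbb{E}\|\tilde{\hat{\mathbf{Y}}}-\tilde{\mathbf{Y}}\|^2 .
\end{equation*}
So it suffices to show $\mathbb{E}\|\tilde{\hat{\mathbf{Y}}}-\tilde{\mathbf{Y}}\|^2\ge(\mathrm{Var}(\hat{\mathbf{Y}})^{1/2}-\mathrm{Var}(\mathbf{Y})^{1/2})^2$.

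Expand the second term again:
\begin{equation*}
\mathbb{E}\|\tilde{\hat{\mathbf{Y}}}-\tilde{\mathbf{Y}}\|^2=\mathrm{Var}(\hat{\mathbf{Y}})+\mathrm{Var}(\mathbf{Y})-2\,\mathbb{E}\langle\tilde{\hat{\mathbf{Y}}},\tilde{\mathbf{Y}}\rangle .
\end{equation*}
Here we use the total-variance definition \eqref{eq:supp_total_variance}, i.e.\ $\mathrm{Var}(\mathbf{Z})=\mathbb{E}\|\mathbf{Z}-\mathbb{E}\mathbf{Z}\|^2$, and $\langle\cdot,\cdot\rangle$ is the Frobenius inner product.

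To control the inner-product term, apply Cauchy--Schwarz twice: pointwise, $\langle\tilde{\hat{\mathbf{Y}}},\tilde{\mathbf{Y}}\rangle\le\|\tilde{\hat{\mathbf{Y}}}\|\,\|\tilde{\mathbf{Y}}\|$, and then in $L^2$ of the probability space, $\mathbb{E}[\|\tilde{\hat{\mathbf{Y}}}\|\|\tilde{\mathbf{Y}}\|]\le(\mathbb{E}\|\tilde{\hat{\mathbf{Y}}}\|^2)^{1/2}(\mathbb{E}\|\tilde{\mathbf{Y}}\|^2)^{1/2}$. Together these give
\begin{equation*}
\mathbb{E}\langle\tilde{\hat{\mathbf{Y}}},\tilde{\mathbf{Y}}\rangle\le\mathrm{Var}(\hat{\mathbf{Y}})^{1/2}\mathrm{Var}(\mathbf{Y})^{1/2}.
\end{equation*}
Substituting this into the previous display completes the square, $a+b-2\sqrt{ab}=(\sqrt a-\sqrt b)^2$, and combining with the bias term yields \eqref{eq:supp_mse_lower_bound}.

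There is no real obstacle. We only need finite second moments of $\mathbf{Y}$ and $\hat{\mathbf{Y}}$, assumed implicitly, so that every expectation is finite and the cross term vanishes. We should also note that the bias term appears as the squared norm $\|\mu_{\hat Y}-\mu_Y\|^2$, which reduces to the squared scalar written in the main text when the output is scalar. The only step needing care is the double Cauchy--Schwarz, so that the bound holds for vector- or matrix-valued outputs with total variance rather than only for scalars.
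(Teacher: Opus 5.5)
Your proposal is correct and follows essentially the same route as the paper: an exact bias--variance split in which the mean-difference cross term vanishes, expansion of the centered term, and Cauchy--Schwarz on the covariance term to complete the square. Your explicit two-step Cauchy--Schwarz (pointwise, then in $L^2$) and your remark about finite second moments make precise what the paper's single ``By Cauchy--Schwarz'' step leaves implicit.
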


\begin{proof}
Write
\begin{equation}
\boldsymbol{\mu}_{\hat Y}:=\mathbb{E}[\hat{\mathbf{Y}}],
\qquad
\boldsymbol{\mu}_{Y}:=\mathbb{E}[\mathbf{Y}].
\end{equation}
Then
\begin{equation}
\mathbf{Y}-\hat{\mathbf{Y}}
=
(\mathbf{Y}-\boldsymbol{\mu}_{Y})
-
(\hat{\mathbf{Y}}-\boldsymbol{\mu}_{\hat Y})
+
(\boldsymbol{\mu}_{Y}-\boldsymbol{\mu}_{\hat Y}).
\end{equation}
Taking squared norm and expectation,
\begin{align}
\mathrm{MSE}
&=
\mathbb{E}\bigl\|
(\mathbf{Y}-\boldsymbol{\mu}_{Y})
-
(\hat{\mathbf{Y}}-\boldsymbol{\mu}_{\hat Y})
+
(\boldsymbol{\mu}_{Y}-\boldsymbol{\mu}_{\hat Y})
\bigr\|^2 \notag\\
&=
\|\boldsymbol{\mu}_{Y}-\boldsymbol{\mu}_{\hat Y}\|^2
+
\mathbb{E}\bigl\|
(\mathbf{Y}-\boldsymbol{\mu}_{Y})
-
(\hat{\mathbf{Y}}-\boldsymbol{\mu}_{\hat Y})
\bigr\|^2,
\label{eq:supp_bias_var_expand}
\end{align}
because the cross term vanishes after expectation.

Next, expand the second term:
\begin{align}
\mathbb{E}\bigl\|
(\mathbf{Y}-\boldsymbol{\mu}_{Y})
-
(\hat{\mathbf{Y}}-\boldsymbol{\mu}_{\hat Y})
\bigr\|^2
&=
\mathrm{Var}(\mathbf{Y})
+
\mathrm{Var}(\hat{\mathbf{Y}})
-
2\,\mathbb{E}\!\left[
\langle
\mathbf{Y}-\boldsymbol{\mu}_{Y},
\hat{\mathbf{Y}}-\boldsymbol{\mu}_{\hat Y}
\rangle
\right].
\end{align}
By Cauchy--Schwarz,
\begin{equation}
\mathbb{E}\!\left[
\langle
\mathbf{Y}-\boldsymbol{\mu}_{Y},
\hat{\mathbf{Y}}-\boldsymbol{\mu}_{\hat Y}
\rangle
\right]
\le
\sqrt{\mathrm{Var}(\mathbf{Y})\,\mathrm{Var}(\hat{\mathbf{Y}})}.
\end{equation}
Hence
\begin{equation}
\mathbb{E}\bigl\|
(\mathbf{Y}-\boldsymbol{\mu}_{Y})
-
(\hat{\mathbf{Y}}-\boldsymbol{\mu}_{\hat Y})
\bigr\|^2
\ge
\mathrm{Var}(\mathbf{Y})
+
\mathrm{Var}(\hat{\mathbf{Y}})
-
2\sqrt{\mathrm{Var}(\mathbf{Y})\,\mathrm{Var}(\hat{\mathbf{Y}})}.
\end{equation}
The right-hand side equals
\begin{equation}
\left(
\mathrm{Var}(\hat{\mathbf{Y}})^{1/2}
-
\mathrm{Var}(\mathbf{Y})^{1/2}
\right)^2.
\end{equation}
Substituting this into \eqref{eq:supp_bias_var_expand} proves Lemma~\ref{lem:supp_mse_lower}.
\end{proof}

\begin{theorem}[MSE lower bound induced by the attention response and the input variability]
\label{thm:supp_main}
Assume that the linear prediction head is non-degenerate with minimum singular value
$\sigma_{\min}(\mathbf{W})\ge c_G>0$.
Assume further that the attention response satisfies
$\mathrm{Var}(F(\mathbf{X})) \ge c_F^2 \mathrm{Var}(\mathbf{X})$
for some constant $c_F>0$. In addition, we assume that 
\begin{equation}
\mathrm{Var}(\mathbf{Y}) = \mathrm{Var}(\mathbf{X}),
\label{eq:supp_thm_matched}
\end{equation}
since \(\mathbf{X}\) and \(\mathbf{Y}\) are drawn from the same precipitation event sequence and therefore are assumed to have close variability. 
If $c_G c_F>1$, then
\begin{equation}
\mathrm{MSE}
\ge
\bigl\|\mathbb{E}[\hat{\mathbf{Y}}]-\mathbb{E}[\mathbf{Y}]\bigr\|^2
+
\left(
c_G\,\mathrm{Var}(F(\mathbf{X}))^{1/2}
-
\mathrm{Var}(\mathbf{Y})^{1/2}
\right)^2,
\label{eq:supp_thm_bound_F}
\end{equation}
and consequently
\begin{equation}
\mathrm{MSE}
\ge
\bigl\|\mathbb{E}[\hat{\mathbf{Y}}]-\mathbb{E}[\mathbf{Y}]\bigr\|^2
+
\left(c_G-\frac{1}{c_F}\right)^2 \mathrm{Var}(F(\mathbf{X})),
\label{eq:supp_thm_bound_F2}
\end{equation}
as well as
\begin{equation}
\mathrm{MSE}
\ge
\bigl\|\mathbb{E}[\hat{\mathbf{Y}}]-\mathbb{E}[\mathbf{Y}]\bigr\|^2
+
(c_G c_F - 1)^2 \mathrm{Var}(\mathbf{Y}).
\label{eq:supp_thm_bound_X}
\end{equation}
In particular, if the predictor is unbiased, i.e.,
$\mathbb{E}[\hat{\mathbf{Y}}]=\mathbb{E}[\mathbf{Y}]$,
then the last two bounds reduce to
\begin{equation}
\mathrm{MSE}\ge \left(c_G-\frac{1}{c_F}\right)^2 \mathrm{Var}(F(\mathbf{X})),
\qquad
\mathrm{MSE}\ge (c_G c_F - 1)^2 \mathrm{Var}(\mathbf{X}).
\label{eq:supp_unbiased_case}
\end{equation}
\end{theorem}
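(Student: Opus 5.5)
The plan is to combine Lemma~\ref{lem:supp_mse_lower} with Lemma~\ref{lem:supp_linear_var}, using monotonicity of $t\mapsto (t-s)^2$ on $[s,\infty)$. Write $v_X=\mathrm{Var}(\mathbf{X})^{1/2}$, $v_F=\mathrm{Var}(F(\mathbf{X}))^{1/2}$, $v_{\hat Y}=\mathrm{Var}(\hat{\mathbf{Y}})^{1/2}$ and $v_Y=\mathrm{Var}(\mathbf{Y})^{1/2}$.

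First, Lemma~\ref{lem:supp_linear_var} gives $v_{\hat Y}\ge c_G v_F\ge c_Gc_F v_X$. By the matching assumption \eqref{eq:supp_thm_matched}, $v_X=v_Y$, so $c_G v_F\ge c_Gc_F v_Y\ge v_Y$ because $c_Gc_F>1$. Hence both $v_{\hat Y}$ and $c_G v_F$ lie in the region $[v_Y,\infty)$, where $(t-v_Y)^2$ is nondecreasing. This is the one point needing care: without $c_Gc_F>1$, a lower bound on $v_{\hat Y}$ would not transfer to the squared difference. From $v_{\hat Y}\ge c_G v_F\ge v_Y$ we get $(v_{\hat Y}-v_Y)^2\ge (c_G v_F-v_Y)^2$. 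Substituting into Lemma~\ref{lem:supp_mse_lower} yields \eqref{eq:supp_thm_bound_F}.

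For \eqref{eq:supp_thm_bound_F2}, I would bound $v_Y$ above in terms of $v_F$: from the assumption, $v_Y=v_X\le v_F/c_F$. Then $c_G v_F-v_Y\ge (c_G-1/c_F)v_F\ge 0$, where nonnegativity again uses $c_Gc_F>1$. Squaring preserves the inequality, giving \eqref{eq:supp_thm_bound_F2}.

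For \eqref{eq:supp_thm_bound_X}, I would instead use $c_G v_F\ge c_Gc_F v_X=c_Gc_F v_Y$. This gives $c_G v_F-v_Y\ge (c_Gc_F-1)v_Y\ge0$, and squaring yields the claim. Finally, the unbiased case follows by setting the bias term to zero and using $\mathrm{Var}(\mathbf{Y})=\mathrm{Var}(\mathbf{X})$.

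The only step I expect to be an obstacle is checking that every quantity being squared is nonnegative, so that squaring preserves the inequalities. This is exactly where the hypothesis $c_Gc_F>1$ is used.
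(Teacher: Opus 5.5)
Your proposal is correct and follows essentially the same route as the paper's proof. Lemma~\ref{lem:supp_linear_var} and the matching assumption place both $\mathrm{Var}(\hat{\mathbf{Y}})^{1/2}$ and $c_G\,\mathrm{Var}(F(\mathbf{X}))^{1/2}$ above $\mathrm{Var}(\mathbf{Y})^{1/2}$; monotonicity of $u\mapsto(u-\mathrm{Var}(\mathbf{Y})^{1/2})^2$ then feeds the bound into Lemma~\ref{lem:supp_mse_lower}; and the last two bounds follow from $\mathrm{Var}(\mathbf{Y})^{1/2}\le \mathrm{Var}(F(\mathbf{X}))^{1/2}/c_F$ and $\mathrm{Var}(F(\mathbf{X}))^{1/2}\ge c_F\,\mathrm{Var}(\mathbf{Y})^{1/2}$, followed by squaring. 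Your explicit nonnegativity checks before each squaring, and your non-strict inequality $c_G\,\mathrm{Var}(F(\mathbf{X}))^{1/2}\ge \mathrm{Var}(\mathbf{Y})^{1/2}$ (which also covers the degenerate case $\mathrm{Var}(\mathbf{Y})=0$, where the paper's strict version can fail), simply make precise what the paper leaves implicit.
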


\begin{proof}
By Lemma~\ref{lem:supp_linear_var},
\begin{equation}
\mathrm{Var}(\hat{\mathbf{Y}})
\ge
c_G^2\,\mathrm{Var}(F(\mathbf{X})).
\end{equation}
Taking square roots,
\begin{equation}
\mathrm{Var}(\hat{\mathbf{Y}})^{1/2}
\ge
c_G\,\mathrm{Var}(F(\mathbf{X}))^{1/2}.
\end{equation}
Also, Assumption~\ref{assump:supp_attn_var} and $\mathrm{Var}(\mathbf{Y})=\mathrm{Var}(\mathbf{X})$ imply
\begin{equation}
c_G\,\mathrm{Var}(F(\mathbf{X}))^{1/2}
\ge
c_G c_F\,\mathrm{Var}(\mathbf{X})^{1/2}
=
c_G c_F\,\mathrm{Var}(\mathbf{Y})^{1/2}.
\end{equation}
Since $c_G c_F>1$, we obtain
\begin{equation}
c_G\,\mathrm{Var}(F(\mathbf{X}))^{1/2}
>
\mathrm{Var}(\mathbf{Y})^{1/2}.
\end{equation}
Therefore, on the relevant range, the function
\begin{equation}
u \mapsto (u-\mathrm{Var}(\mathbf{Y})^{1/2})^2
\end{equation}
is increasing. Hence
\begin{equation}
\left(
\mathrm{Var}(\hat{\mathbf{Y}})^{1/2}
-
\mathrm{Var}(\mathbf{Y})^{1/2}
\right)^2
\ge
\left(
c_G\,\mathrm{Var}(F(\mathbf{X}))^{1/2}
-
\mathrm{Var}(\mathbf{Y})^{1/2}
\right)^2.
\end{equation}
Combining this with Lemma~\ref{lem:supp_mse_lower} yields \eqref{eq:supp_thm_bound_F}.

Next, since
\begin{equation}
\mathrm{Var}(F(\mathbf{X}))
\ge
c_F^2\,\mathrm{Var}(\mathbf{X})
=
c_F^2\,\mathrm{Var}(\mathbf{Y}),
\end{equation}
we have
\begin{equation}
\mathrm{Var}(\mathbf{Y})^{1/2}
\le
\frac{1}{c_F}\,\mathrm{Var}(F(\mathbf{X}))^{1/2}.
\end{equation}
Thus
\begin{equation}
c_G\,\mathrm{Var}(F(\mathbf{X}))^{1/2}
-
\mathrm{Var}(\mathbf{Y})^{1/2}
\ge
\left(c_G-\frac{1}{c_F}\right)\mathrm{Var}(F(\mathbf{X}))^{1/2}.
\end{equation}
Squaring both sides gives \eqref{eq:supp_thm_bound_F2}.

Similarly,
\begin{equation}
\mathrm{Var}(F(\mathbf{X}))^{1/2}
\ge
c_F\,\mathrm{Var}(\mathbf{X})^{1/2}
=
c_F\,\mathrm{Var}(\mathbf{Y})^{1/2},
\end{equation}
so
\begin{equation}
c_G\,\mathrm{Var}(F(\mathbf{X}))^{1/2}
-
\mathrm{Var}(\mathbf{Y})^{1/2}
\ge
(c_G c_F-1)\,\mathrm{Var}(\mathbf{Y})^{1/2}.
\end{equation}
Squaring again gives \eqref{eq:supp_thm_bound_X}.
\end{proof}

\noindent\textbf{Unbiased case.}
If the predictor is unbiased, i.e.
\begin{equation}
\mathbb{E}[\hat{\mathbf{Y}}]=\mathbb{E}[\mathbf{Y}],
\end{equation}
then the bias term vanishes in \eqref{eq:supp_thm_bound_F}--\eqref{eq:supp_thm_bound_X}, and we obtain
\begin{equation}
\mathrm{MSE}
\ge
\left(c_G-\frac{1}{c_F}\right)^2 \mathrm{Var}(F(\mathbf{X})),
\end{equation}
and
\begin{equation}
\mathrm{MSE}
\ge
(c_G c_F-1)^2 \mathrm{Var}(\mathbf{X}).
\end{equation}

\section{Dataset Details}
In this section, we provide extra detailed descriptions of the two commonly used benchmark datasets, SEVIR~\cite{veillette2020sevir} and MeteoNet~\cite{larvor2020meteo}, including their spatial and temporal resolutions, threshold settings, data splits, and preprocessing procedures.

SEVIR is an annotated dataset that temporally and spatially aligns multiple meteorological products, including visible satellite imagery, infrared channels (mid-level water vapor and clean longwave window), NEXRAD radar mosaics of vertically integrated liquid (VIL), and ground-based lightning detections. Each event consists of a 4-hour sequence of images sampled every 5 minutes over a $384\text{ km} \times 384\text{ km}$ region across the continental United States. Radar data have a spatial resolution of 1 km, whereas satellite data are provided at 2 km resolution.  In our experiments, we use only the radar (VIL) and satellite IR107 channels and focus on precipitation events in each sequence. All frames are rescaled to the range [0, 255] and binarized at thresholds [16, 74, 133, 160, 181, 219] for computing CSI and HSS. 
For single-modality comparison, we follow AlphaPre~\cite{Lin_2025_CVPR} setting: The data are split into training, validation, and test sets using January 1, 2019 and June 1, 2019 as temporal cutoffs. Each event consists of 25 consecutive frames sampled, where 5 observed frames (25 minutes) are used to predict the next 20 VIL frames (100 minutes). All inputs are downsampled to a spatial resolution of $128 \times 128$ to reduce computational overhead. For multimodal comparison, we adopt PiMMNet~\cite{yu2025pimmnet} setting: It predicts 3-hour precipitation evolution (36 frames) from the preceding 1-hour observations (12 frames). We maintain the original temporal resolution while downscaling the spatial resolution to $128 \times 128$ for efficiency.

MeteoNet is a multimodal dataset containing time series of satellite and radar imagery, numerical weather model outputs, and ground observations. It covers a $550\text{ km} \times 550\text{ km}$ region over northwestern and southeastern France and spans three years (2016–2018), with radar reflectivity recorded every 5 minutes and satellite infrared data every hour. Radar has $0.01^{\circ}$ ($\approx$1.11 km) resolution and satellite has $0.03^{\circ}$ ($\approx$3.33 km) resolution. In this work, we use the radar reflectivity fields together with the satellite IR108 channel. Because the satellite data have a much lower temporal resolution, we use only a single IR frame aligned with radar input and replicate it $T_I$ times to match the radar sequence length. Following AlphaPre~\cite{Lin_2025_CVPR}: We split radar sequences from 2016–2018 into training, validation, and test sets using January 1, 2018 and June 1, 2018 as temporal cutoffs. To reduce computational cost, all inputs are downsampled to $128 \times 128$ resolution. For CSI and HSS evaluation, we adopt thresholds [12, 18, 24, 32], consistent with AlphaPre~\cite{Lin_2025_CVPR}.

\section{Additional Results and Analysis}
In this section, we provide an extra analysis on the design of framework loss, the effectiveness of our model design and the computational complexity.

\subsection{Attention vs. Convolution} 

Consistent with this motivation and recent practice~\cite{yu2024diffcast, yu2025integrating}, we use attention to enable sample-adaptive non-local modeling, which applies to both unimodal and multimodal observations.
To verify the superiority of attention mechanism, we add ablations shown in Table~\ref{tab:attn_vs_conv}, comparing vanilla convolution (Conv), DuoCast Air-mass convolution (Air-mass)~\cite{wen2026duocast}, vanilla attention (Attn), and our HARE-stabilized attention. Vanilla attention outperforms vanilla and Air-mass convolution, and HARE further improves vanilla attention, showing that the gain comes from stabilizing adaptive attention responses.

\begin{table}[htbp]
\centering
\renewcommand{\arraystretch}{1}
\setlength{\tabcolsep}{3.5 pt}
\caption{Comparison of attention and convolution methods on the multimodal SEVIR dataset.}
{%
\begin{tabular}{|l|cccccc|}
\hline
\multirow{2}{*}{Method} & \multicolumn{6}{c|}{\cellcolor{gray!20}SEVIR (Multimodal)} \\
\cline{2-7}
 & \textuparrow CSI-M & \textuparrow CSI$_{4}$ & \textuparrow CSI$_{16}$ & \textuparrow CSI-219 & \textuparrow HSS & \textdownarrow LPIPS  \\ 
\hline
 Conv  & 0.206 & 0.243 & 0.344 & 0.017 & 0.261 & 0.314\\ 
 Air-mass  & 0.219 & 0.259 & 0.360 & 0.022 & 0.281 & 0.289\\ 
Attn  & 0.230 & 0.267 & 0.376 & 0.031 & 0.297 & 0.255\\ 
HARECast  & \textbf{0.254} & \textbf{0.316} & \textbf{0.421} & \textbf{0.053} & \textbf{0.315} & \textbf{0.246}\\ 
\hline
\end{tabular}%
}
\label{tab:attn_vs_conv}
\end{table}

\subsection{Empirical Exam on Theoretical Assumptions ($c_F > 1, c_G >1$)} 
We further conduct an empirical sanity check on 20 random batches, shown in Table~\ref{tab:empirical_exam}. The average inter-sample variance increases from $7.4{\times}10^{-4}$ at the first attention layer to $8.88{\times}10^{-3}$ at the second attention layer and $4.606{\times}10^{-2}$ at the final layer. This is consistent with Assumption 1 and provides empirical support for the variance-propagation premise in Lemma 1 and Theorem 1. 

\begin{table}[!htbp]
\centering
\caption{Empirical examination of inter-sample variance propagation across attention layers on 20 randomly sampled batches.}
\renewcommand{\arraystretch}{1.0}
\setlength{\tabcolsep}{18pt}
\begin{tabular}{|l|c|}
\hline
Layer & \cellcolor{gray!20}Average Inter-sample Variance \\
\hline
Attn. Layer 1 & 0.00074 \\
Attn. Layer 2 & 0.00888 \\
Final Layer & 0.04606 \\
\hline
\end{tabular}
\label{tab:empirical_exam}
\end{table}

\subsection{Empirical Exam on Theoretical Assumption $\mathrm{Var}(\mathbf{X}) = \mathrm{Var}(\mathbf{Y})$} 

Since $\mathbf{X}$ and $\mathbf{Y}$ are consecutive normalized radar observations from the same event, we use $\mathrm{Var}(\mathbf{X}) \approx \mathrm{Var}(\mathbf{Y})$ as a mild approximation. 
As shown in Table~\ref{tab:sevir_x_y}, their inter-sample variances are close on SEVIR, with a ratio of $1.06$, empirically supporting this approximation.

\begin{table}[!htbp]
\centering
\caption{Empirical comparison of the inter-sample variances of consecutive normalized radar observations $\mathbf{X}$ and $\mathbf{Y}$ on SEVIR.}
\renewcommand{\arraystretch}{0.95}
\setlength{\tabcolsep}{14pt}
\begin{tabular}{|l|ccc|}
\hline
Data & \cellcolor{gray!20}Var($\mathbf{X}$) & \cellcolor{gray!20}Var($\mathbf{Y}$) & \cellcolor{gray!20}Ratio \\
\hline
SEVIR & 0.002325 & 0.002465 & 1.060 \\
\hline
\end{tabular}
\label{tab:sevir_x_y}
\end{table}

\subsection{Hyper-Parameters for Loss Weights}
We further analyze the training behavior of HARECast by examining the loss curves shown in Figure~\ref{fig:all_losses}, together with the complementary sensitivity study on different loss-weight configurations reported in Table 7 of the main manuscript. These results provide additional insight into both the optimization stability of the proposed framework and the robustness of its loss design. 

As shown in the figure, all loss terms decrease steadily over the course of training, indicating that the overall optimization process remains stable. In particular, the reconstruction loss, HARE stabilization loss, and diffusion loss are all well-behaved and do not exhibit conflicting or unstable dynamics, suggesting good compatibility among the three objectives. This observation supports the effectiveness of the proposed multi-term training objective and shows that the different components of HARECast can be optimized jointly in a balanced manner. 

\begin{figure}[t]
  \centering
   \includegraphics[width=\linewidth]{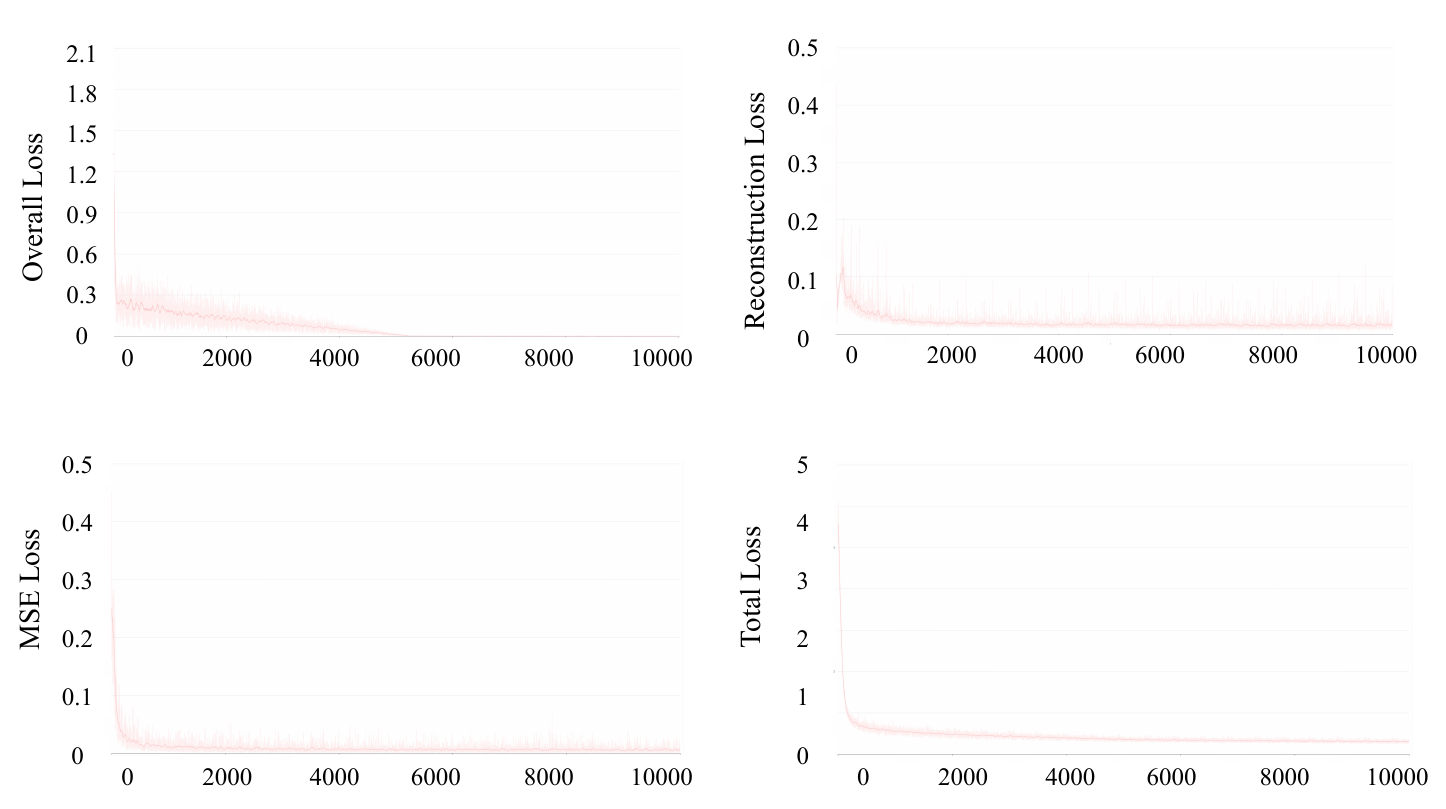}
   \caption{
   Training dynamics of the loss terms. All components of the objective exhibit stable, monotonic decreases during training, indicating a well-behaved optimization process.
   }
   \label{fig:all_losses}
\end{figure}

\subsection{Effectiveness of Group-wise HARE Stabilization in Multimodal Setting}

In addition to the ablation study reported in Table 4 of the main manuscript, we further evaluate the effect of the proposed HARE stabilization in the multimodal setting, as shown in Table~\ref{tab:ablation_multimodal}.

The table shows that the proposed group-wise HARE stabilization consistently improves performance in the multimodal setting. Removing G-HARE leads to degradation across all reported metrics. In particular, HARECast achieves clear gains on both moderate and heavy precipitation thresholds, indicating that attention stabilization is beneficial not only for overall forecasting quality but also for capturing more challenging high-intensity precipitation structures. The lower LPIPS score further suggests that HARE stabilization improves perceptual fidelity in the predicted precipitation fields. These results confirm that the proposed stabilization mechanism remains effective in the multimodal setting, where heterogeneous inputs introduce larger cross-sample variability and make stable attention modeling more challenging.

\begin{table}[htbp]
\centering
\renewcommand{\arraystretch}{1}
\setlength{\tabcolsep}{2.1pt}
\caption{Ablation study of HARE stabilization in multimodal setting. ``w/o G-HARE'' denotes removing the HARE stabilization module.}
{%
\begin{tabular}{|l|cccccc|}
\hline
\multirow{2}{*}{Method} & \multicolumn{6}{c|}{\cellcolor{gray!20}SEVIR (Multimodal)} \\
\cline{2-7}
 & \textuparrow CSI-M & \textuparrow CSI$_4$ & \textuparrow CSI$_{16}$ & \textuparrow CSI-219 & \textuparrow HSS  & \textdownarrow LPIPS  \\ 
\hline
HARECast & \textbf{0.254}& \textbf{0.316}& \textbf{0.421}& \textbf{0.053} & \textbf{0.315}& \textbf{0.246}  \\
w/o G-HARE & 0.221 &  0.279 & 0.377 & 0.037 & 0.289 & 0.257 \\
\hline
\end{tabular}%
}
\label{tab:ablation_multimodal}
\end{table}

\subsection{Effectiveness of Energy Head Group Partition}
In addition to the ablation study reported in Table 5 of the main manuscript, which validates the effectiveness of the proposed head-grouping design, we further visualize the attention maps of the $\mathcal H^{\uparrow}$ and $\mathcal H^{\circ} $ head groups shown in Figure~\ref{fig:head_vis}. 

The $\mathcal H^{\uparrow}$ heads produce much sharper and more localized attention patterns, with several bright, concentrated responses aligned with the main high-intensity precipitation structures in the input. This suggests that strong heads $\mathcal H^{\uparrow}$ are responsible for capturing salient, event-critical regions, such as intense rain cores and sharp structural transitions. In contrast, the $\mathcal H^{\circ}$ heads exhibit smoother and more diffuse attention maps, indicating that they respond more broadly to contextual and supporting information. Taken together, the visualization supports the motivation of the grouping design: different head groups play distinct roles.

\begin{figure}[t]
  \centering
   \includegraphics[width=\linewidth]{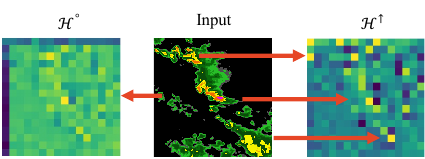}
   \caption{
   Visualization of attention maps from different head groups. Group $\mathcal H^{\circ}$ focuses on more diffuse responses of the contextual information, while group $\mathcal H^{\uparrow}$ exhibits sharper and more localized activations aligned with salient precipitation structures in the input.
   }
   \label{fig:head_vis}
\end{figure}

\subsection{Hyper-Parameter Sensitivity to Threshold $\alpha$}

In the main manuscript, the head-grouping threshold $\alpha$ is set to 0.75. To evaluate the robustness of this choice, we further conduct a sensitivity analysis over different values of $\alpha$, as reported in Table~\ref{tab:alpha_setting}.

As shown in Table~\ref{tab:alpha_setting}, $\alpha=0.75$ consistently yields the strongest performance across all evaluation metrics. When $\alpha=0.50$, the partition becomes overly conservative, so only a limited number of weak-response heads are assigned to the low-energy group, reducing the effectiveness of group-wise stabilization. On the other hand, $\alpha=0.90$ is too aggressive, causing many moderately activated heads to be grouped as low-energy and thereby weakening the discriminability between contextual and inactive heads. Overall, these results indicate that $\alpha=0.75$ achieves a more balanced head partition and is therefore better suited for the proposed HARE stabilization strategy.

\begin{table}[htbp]
\centering
\renewcommand{\arraystretch}{1}
\setlength{\tabcolsep}{2.1pt}
\caption{Effect of threshold $\alpha$ in Head Partition.}
{%
\begin{tabular}{|l|cccccc|}
\hline
\multirow{2}{*}{Method} & \multicolumn{6}{c|}{\cellcolor{gray!20}SEVIR (Unimodal)} \\
\cline{2-7}
 & \textuparrow CSI-M & \textuparrow CSI-181 & \textuparrow CSI-219 & \textuparrow HSS & \textdownarrow LPIPS  & \textuparrow SSIM  \\ 
\hline
$\alpha=0.75$ & \textbf{0.344} & \textbf{0.183} & \textbf{0.098} & \textbf{0.437} & \textbf{0.211} & \textbf{0.692}\\
$\alpha=0.50$ & 0.335 &  0.167 & 0.079 & 0.411 & 0.234 & 0.664 \\
$\alpha=0.90$ & 0.326 &  0.141 & 0.054 & 0.406 & 0.241 & 0.675 \\
\hline
\end{tabular}%
}
\label{tab:alpha_setting}
\end{table}

\subsection{Batch Size Sensitivity Analysis}
Larger batch sizes can stabilize the batch-level statistics during training and improve optimization. However, this sensitivity is confined to training. At inference, the HARE loss is not used, and HARECast performs standard forward prediction without computing or relying on $\mu_g$. Thus, the deployed model does not require a fixed inference batch size.

\subsection{Two-side Stabilization}
We further evaluate a double-sided HARE variant shown in Table~\ref{tab:two_side}. The results further support the effectiveness of our proposed one-sided formulation.

\begin{table}[!htbp]
\centering
\renewcommand{\arraystretch}{1}
\setlength{\tabcolsep}{1pt}
\caption{Comparison of one-sided and double-sided HARE stabilization on the unimodal SEVIR dataset.}
{%
\begin{tabular}{|l|cccccc|}
\hline
\multirow{2}{*}{Method} & \multicolumn{6}{c|}{\cellcolor{gray!20}SEVIR (Unimodal)} \\
\cline{2-7}
 & \textuparrow CSI-M & \textuparrow CSI-181 & \textuparrow CSI-219 & \textuparrow HSS & \textdownarrow LPIPS  & \textuparrow SSIM  \\ 

\hline
One-sided & \textbf{0.344} & \textbf{0.183} & \textbf{0.098} & \textbf{0.437} & \textbf{0.211} & \textbf{0.692} \\
Double-sided  & 0.336 & 0.175 & 0.088 & 0.424 & 0.221 & 0.679 \\
\hline
\end{tabular}%
}
\label{tab:two_side}
\end{table}

\subsection{More Ablation Study on Multimodal}
We add a multimodal w/o Grouping ablation. HARECast consistently improves all metrics, 
supporting the effectiveness of group-wise attention-energy stabilization for radar-satellite fusion, shown in Table~\ref{tab:ablation_multimodal_m}.

\begin{table}[!htbp]
\centering
\renewcommand{\arraystretch}{1}
\caption{Ablation study of G-HARE and group-wise attention-energy stabilization on the multimodal SEVIR dataset.}
\setlength{\tabcolsep}{1.5pt}
{%
\begin{tabular}{|l|cccccc|}
\hline
\multirow{2}{*}{Method} & \multicolumn{6}{c|}{\cellcolor{gray!20}SEVIR (Multimodal)} \\
\cline{2-7}
 & \textuparrow CSI-M & \textuparrow CSI$_4$ & \textuparrow CSI$_{16}$ & \textuparrow CSI-219 & \textuparrow HSS  & \textdownarrow LPIPS  \\ 
\hline
HARECast & \textbf{0.254}& \textbf{0.316}& \textbf{0.421}& \textbf{0.053} & \textbf{0.315}& \textbf{0.246}  \\
w/o G-HARE & 0.221 &  0.279 & 0.377 & 0.037 & 0.289 & 0.257 \\
w/o Grouping & 0.239 &  0.298 & 0.400 & 0.044 & 0.286 & 0.249 \\
\hline
\end{tabular}%
}
\label{tab:ablation_multimodal_m}
\end{table}

\subsection{Controlled Evidence Beyond Correlation}
To move beyond correlation, we add two controlled evaluations, shown in Table~\ref{tab:hc}. First, with the same inputs and fixed model weights, perturbing attention-response energy degrades performance, indicating that the energy scale is functionally relevant to forecasting rather than a passive correlate. Second, on hard cases (HC; pixels above the 181 threshold occupy $>1\%$ of the area), HARE consistently improves over w/o HARE, showing that the benefit remains when weather difficulty is explicitly considered. Together, these results support that stabilizing attention-response energy contributes to more robust forecasts.

\begin{table}[!htbp]
\centering
\caption{Controlled evaluation of attention-response energy perturbation and HARE performance on hard cases in the unimodal SEVIR dataset.}
\renewcommand{\arraystretch}{1}
\setlength{\tabcolsep}{0.8pt}
{%
\begin{tabular}{|l|cccccc|}
\hline
\multirow{2}{*}{Method} & \multicolumn{6}{c|}{\cellcolor{gray!20}SEVIR (Unimodal)} \\
\cline{2-7}
 & \textuparrow CSI-M & \textuparrow CSI-181 & \textuparrow CSI-219 & \textuparrow HSS & \textdownarrow LPIPS  & \textuparrow SSIM  \\ 

\hline
HARECast & \textbf{0.344} & \textbf{0.183} & \textbf{0.098} & \textbf{0.437} & \textbf{0.211} & \textbf{0.692} \\
Perturbed & 0.323 & 0.157 & 0.069 & 0.423 & 0.236 & 0.677 \\

\hline
HC w HARE & \textbf{0.137} & \textbf{0.061} & \textbf{0.040} & \textbf{0.139} & \textbf{0.549} & \textbf{0.169} \\ 
HC w/o HARE & 0.115 & 0.044 & 0.022 & 0.127 & 0.581 & 0.143 \\
\hline
\end{tabular}%
}
\label{tab:hc}
\end{table}

\subsection{Model Complexity}
Table~\ref{tab:complexities} provides an extra comparison of inference-time parameter counts and computational cost (TFLOPs) between HARECast and recent baselines on SEVIR (100-minute precipitation evolution), evaluated on a single NVIDIA RTX 4090 GPU. Compared with DiffCast, HARECast uses more parameters but requires substantially fewer TFLOPs, indicating better computational efficiency. Relative to AlphaPre, HARECast has a similar parameter scale but a higher computational cost. Even so, the overall computation remains well within the capability of modern hardware: for example, an RTX 4090 provides up to 82.6 TFLOPs in FP32 and 330.4 TFLOPs in FP16, whereas HARECast requires only 13.51 TFLOPs. These results suggest that HARECast achieves a favorable balance between model capacity and practical efficiency for precipitation nowcasting. In addition to computational cost, HARECast also exhibits competitive runtime efficiency. On the same hardware, it requires 0.86 seconds per sample to generate 20 forecast frames, corresponding to the 100-minute SEVIR horizon, whereas DiffCast requires 3.67 seconds per sample. This latency indicates that HARECast is compatible with real-time or near-real-time deployment on commodity GPUs.

\begin{table}[t]
\centering
\renewcommand{\arraystretch}{1}
\setlength{\tabcolsep}{2.8pt}
\caption{Analysis of model complexity with SoTA methods.}
\begin{tabularx}{\columnwidth}{
|>{\centering\arraybackslash}X
|>{\centering\arraybackslash}X
|>{\centering\arraybackslash}X
|>{\centering\arraybackslash}X|}
\hline
\multirow{2}{*}{Method} & \multicolumn{3}{c|}{Complexity}\\
\cline{2-4}
 & \# Param & TFLOPs & Year \\
\hline
Prediff & 135.24M & 2.80 & 2023\\
STRPM & 439.63M & 0.28 & 2022\\
SimVP  & 44.25M & 0.05 & 2022\\
Earth\-former & 34.61M & 0.04 & 2022\\
MAU & 20.13M & 0.09 & 2021\\
ConvGRU & 18.21M & 0.03 & 2017\\
PhyDNet & 11.80M & 0.08 & 2020 \\
DiffCast & 58.33M & 72.49 & 2024\\
FACL & 14.38M & 0.02 & 2024\\ 
AlphaPre & 89.03M & 1.56 & 2025\\
DuoCast & 94.18M & 0.30 & 2026\\
\hline
\rowcolor{lightblue}
HARECast & 90.88M & 13.51 & 2026\\
\hline
\end{tabularx}
\label{tab:complexities}
\end{table}

\subsection{The Statistical Significance of Improvement.}
We assess statistical significance on DuoCast, using paired tests on SEVIR for the metric CSI-M. As shown in Table~\ref{tab:t-test}, HARECast achieves a paired $t$-test $p=4.8{\times}10^{-16}$, a Wilcoxon signed-rank test $p=2.8{\times}10^{-23}$, and a paired bootstrap 95\% confidence interval of $[0.005, 0.006]$. The two tests reject the null hypothesis of no paired difference, and the bootstrap CI excludes zero, consistently supporting a statistically significant positive gain.

\begin{table}[!htbp]
\centering
\renewcommand{\arraystretch}{1.05}
\setlength{\tabcolsep}{4pt}
\caption{Statistical significance analysis of the performance difference between the compared methods using a paired $t$-test, Wilcoxon signed-rank test, and bootstrap confidence interval.}
\begin{tabular}{|l|ccc|}
\hline
\textbf{Test} & \cellcolor{gray!20}\textbf{Statistic} & \cellcolor{gray!20}\textbf{$p$-value} & \cellcolor{gray!20}\textbf{Outcome} \\
\hline
Paired $t$-test 
& 8.1 
& $4.8{\times}10^{-16}$ 
& Sig diff. \\
Wilcoxon 
& $1.0{\times}10^{7}$ 
& $2.8{\times}10^{-23}$ 
& Sig diff. \\
Bootstrap 
& -- 
& -- 
& 95\% CI $[0.005, 0.006]$ \\
\hline
\end{tabular}
\label{tab:t-test}
\end{table}

\section{More Qualitative Results}

Additional qualitative examples for both datasets are provided in Figures~\ref{fig:all_sevir}–\ref{fig:all_meteo}.
Relative to the recent AlphaPre~\cite{Lin_2025_CVPR} and DuoCast~\cite{wen2026duocast} baseline, HARECast produces sharper forecasts and more accurately tracks medium-intensity precipitation, especially at longer lead times.
Compared with DiffCast~\cite{yu2024diffcast}, both methods perform similarly in the early frames, capturing overall patterns and fine-scale structure.
However, DiffCast’s forecasts deteriorate in later frames, showing spatial distortions and reduced structural coherence, whereas HARECast maintains both detail and stability over extended horizons.

\begin{figure*}[htbp]
  \centering
   \includegraphics[width=0.85\linewidth]{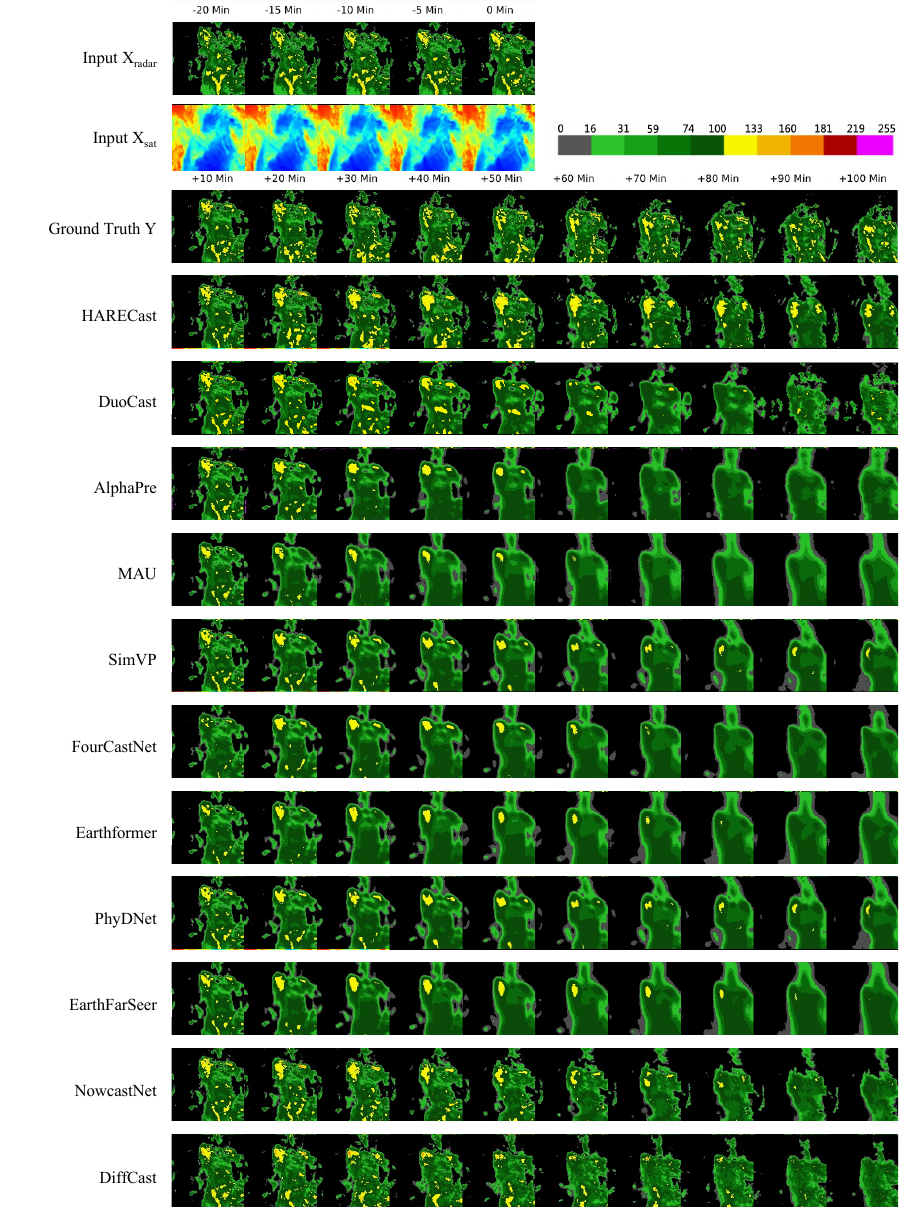}
   \caption{Prediction examples on the SEVIR dataset.}
   \label{fig:all_sevir}
\end{figure*}

\begin{figure*}[htbp]
  \centering
   \includegraphics[width=0.85\linewidth]{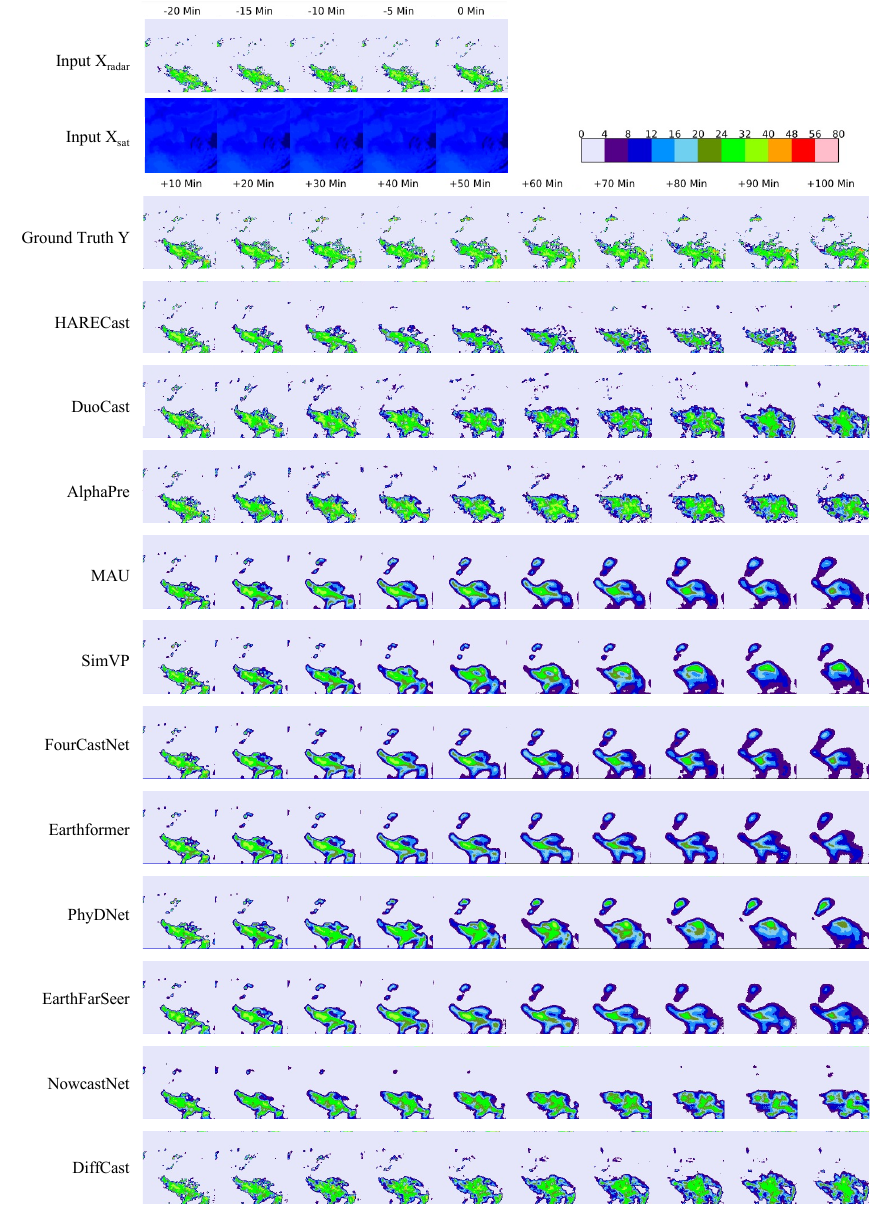}
   \caption{Prediction examples on the MeteoNet dataset.}
   \label{fig:all_meteo}
\end{figure*}

\end{document}